\Crefname{section}{Section}{Sections}
\crefname{section}{Sec.}{Secs.}
\Crefname{table}{Table}{Tables}
\crefname{table}{Tab.}{Tabs.}
\Crefname{appsec}{Appendix}{Appendices}
\crefname{appsec}{App.}{Apps.}
\theoremstyle{plain}
\newtheorem{theorem}{Theorem}
\newtheorem{proposition}{Proposition}
\newtheorem{lemma}{Lemma}
\newtheorem{corollary}{Corollary}
\theoremstyle{definition}
\theoremstyle{remark}
\newtheorem{remark}{Remark}
\newcommand{\overbar}[1]{\mkern 4mu\overline{\mkern-4mu#1\mkern-1mu}\mkern 1mu}
\def\R{\mathbb{R}}
\newlist{todolist}{itemize}{2}
\setlist[todolist]{label=$\square$}
\def\R{\mathbb{R}}
\def\E{\mathbb{E}}
\def\1{\mathds{1}}
\newcommand{\mm}[1]{#1}
\begin{document}

%

%

\twocolumn[

\aistatstitle{Bounding Evidence and Estimating Log-Likelihood in VAE}

\aistatsauthor{ \L{}ukasz Struski \And Marcin Mazur \And  Paweł Batorski \And Przemys\l{}aw Spurek \And Jacek Tabor}

\aistatsaddress{
\\[-8pt]
Jagiellonian University, Faculty  of Mathematics and Computer Science, Kraków, Poland\\
\{lukasz.struski, marcin.mazur, przemyslaw.spurek, jacek.tabor\}@uj.edu.pl
} ]

\begin{abstract}
Many crucial problems in deep learning and statistical inference
are caused by a variational gap, i.e., a difference between model evidence (log-likelihood) and evidence lower bound (ELBO). 
In particular, in a classical VAE setting that involves training via an ELBO cost function, it is difficult to provide a robust comparison of the effects of training between models, since we do not know a log-likelihood of data (but only its lower bound). In this paper, to deal with this problem, we introduce a general and effective upper bound, which allows us to efficiently approximate the evidence of data. We provide extensive theoretical and experimental studies of our approach, including its comparison to the other state-of-the-art upper bounds, as well as its application as a tool for the evaluation of models that were trained on various lower bounds.
\end{abstract}

\section{INTRODUCTION}
Many important models in deep learning 
\citep{bayer2021mind,burda2015importance,kingma2013auto}, reinforcement learning \citep{duo2021improving,todorov2008general,toussaint2006probabilistic} and statistical inference \citep{gao2017bounds,khan2020new} suffer from the existence of a variational gap\footnote{In fact, in such a general context it is rather known as Jensen's gap, but here and henceforth we call it consequently the variational gap.}, which means a difference between the evidence and its lower bound (which follows from Jensen's inequality), i.e.:
\begin{equation}\label{eq:variationalgap} 
\text{variational gap} =  f(\E X)-\E f(X),
\end{equation}
where $X$ is a random variable and $f$ is a concave function. 
A simple visualization of this effect can be delivered using the concave function $f(x)=-x^2$ to transform Gaussian random variable $X \sim \mathcal{N}(0,1)$. Indeed,  in this case $ f(\E X) =0 > -1 =\E f(X)$ (note that $-f(X) \sim \chi^2(1)$).

\begin{table}[!ht]\small
\centering
\caption{Estimated size of various variational gap bounds for the evidence of data (lower is better), calculated for VAE, IWAE-5, and IWAE-10 models, previously trained on MNIST, SVHN, and CelebA datasets. All computations were averaged over 3 collections of $2^{16}$ latent samples, and over the test dataset.}
\label{tab.gap}
\begin{tabular}{@{}c@{\!}l@{\;\,}r@{\;\;}r@{\;\;}r@{\;\;}r@{\;\;}r@{\;\;}r@{\;\;}r@{\;\;}r@{\;\;}r@{}}
\toprule
& & \multicolumn{8}{c}{\bfseries VARIATIONAL GAP BOUND (VG-B)} \\
\cmidrule{3-10}
  \rotatebox{77}{\bfseries DATASET} & \rotatebox{77}{\bfseries MODEL} & \rotatebox{77}{\small \bfseries IS (OUR)} & \rotatebox{77}{\small \bfseries $\text{CUBO}_{1.5}$} & \rotatebox{77}{\small \bfseries $\text{CUBO}_2$} & \rotatebox{77}{\small \bfseries EUBO} & \rotatebox{77}{\small \bfseries $\text{TVO}_2$} & \rotatebox{77}{\small \bfseries $\text{TVO}_5$} & \rotatebox{77}{\small \bfseries $\text{TVO}_{10}$} & \rotatebox{77}{\small \bfseries $\text{TVO}_{50}$} \\
\midrule
 
\multirow[c]{3}{*}{\rotatebox{90}{MNIST}} & VAE & 0.04 & 1.59 & 2.83 & 6.68 & 3.34 & 1.34 & 0.67 & 0.13 \\
 & IWAE-5 & 0.004 & 1.32 & 2.24 & 12.38 & 6.19 & 2.48 & 1.24 & 0.25 \\
 & IWAE-10 & 0.01 & 1.54 & 2.55 & 18.03 & 9.01 & 3.61 & 1.80 & 0.36 \\
\cmidrule{1-10}
\multirow[c]{3}{*}{\rotatebox{90}{SVHN}} & VAE & 0.44 & 3.16 & 4.84 & 21.08 & 10.54 & 4.22 & 2.11 & 0.42 \\
 & IWAE-5 & 0.31 & 2.74 & 4.30 & 28.82 & 14.41 & 5.76 & 2.88 & 0.58 \\
 & IWAE-10 & 0.28 & 2.83 & 4.40 & 32.00 & 16.00 & 6.40 & 3.20 & 0.64 \\
\cmidrule{1-10}
\multirow[c]{3}{*}{\rotatebox{90}{CelebA}} & VAE & 1.12 & 3.25 & 4.92 & 45.45 & 22.73 & 9.09 & 4.55 & 0.91 \\
 & IWAE-5 & 1.23 & 3.40 & 5.14 & 85.41 & 42.70 & 17.08 & 8.54 & 1.71 \\
 & IWAE-10 & 0.72 & 3.22 & 4.86 & 75.46 & 37.73 & 15.09 & 7.55 & 1.51 \\
\bottomrule
\end{tabular}
\end{table}

In machine learning literature, where typically $f=\log$, various approximations of the true evidence $f(\E X)$ were proposed (see \Cref{sec:relwork} and references therein), but are often difficult to efficiently use in a deep neural network architecture. One of the reasons for this is that we train such models on mini-batches, and therefore the standard assumption is that the cost function factorizes as the sum over the input data set. In the other words, deep networks are designed to maximize (over network parameters) $\E f(X)$, rather than $f(\E X)$, which admits respective sample mean (unbiased) estimator.

However, there often naturally appear situations where maximization of $f(\E X)$ is an actual goal.
Probably the most important such case is the variational autoencoder (VAE)~\citep{kingma2013auto,pmlr-v32-rezende14}, which is 
one of the most popular autoencoder-based generative models.
Precisely, VAE uses an encoder network $q(z|x)$, which reduces the dimension of data and produces their latent codes (forced to follow approximately a given latent prior distribution $p(z)$), and a decoder network $p(x|z)$ that transforms the latent codes back to the data space. Both networks are jointly trained to maximize a variational lower bound for the log-likelihood (the evidence) of data: 
\begin{equation}\label{eq:evidence}
\begin{array}{l @{\;} l @{\;} l}
\text{evidence} & = & \log \E_{z\sim q(\cdot |x)} \frac{ p (x|z)p(z) }{q(z|x)},
\end{array}
\end{equation}
 which is known as evidence lower bound or (briefly) ELBO:
\begin{equation}\label{eq:elbo}
\begin{array}{l @{\;} l @{\;} l}
\text{ELBO} & = & \E_{z\sim q(\cdot |x)} \log \frac{ p (x|z) p(z) }{q(z|x)}.
\end{array}
\end{equation}

The use of ELBO, instead of a direct value of log-likelihood, seems to be a fundamental problem in VAE. In practice, such optimization can lead to learning suboptimal parameters \citep{burda2015importance}, when we mean that our final goal is an approximation of data distribution. Hence, estimating, bounding, and reducing a difference between the evidence and ELBO, i.e., the variational gap, became important issues investigated so far by many authors from the machine learning community (see \Cref{sec:relwork} for a respective overview of the literature). Moreover, such problems are (in a general context) strictly related to those concerning (reversed) Jensen's inequality, which is a subject of studies in various fields of pure and applied mathematics, including statistical inference \citep{brnetic2015refinement, horvath2021extensions,saeed2022refinements,dragomir2013some,jebara2001reversing,nielsen2010family}, reinforcement learning \citep{dayan1997using,williams2017information}, or even biological studies \citep{ruel1999jensen}.

Our work provides a comprehensive theoretical approach regarding the variational gap.
Most of all, we construct novel upper bounds for $f(\E X)$ (and hence also for the size of variational gap), which are given as expected values of some random variables that depend on $X$, and then, inspired by \citep{burda2015importance}, combine them with the technique of importance sampling, to derive tight bounds for the exact value of $f(\E X)$. Additionally, as an application in the field of deep learning, we use these general results for precisely estimating the log-likelihood of data for generative models, which are designed to learn only some lower bounds. Consequently, we obtain a method that allows comparing the effectiveness of the training process, which we examine using a few different experimental settings involving VAE-like architectures, i.e., the classical (Gaussian) VAE and two variants of the importance-weighted (Gaussian) autoencoder (IWAE)~\citep{burda2015importance}, all trained on MNIST, SVHN, and CelebA datasets.

Our contribution can be summarized as follows:
\vspace{-1.5\topsep}
\begin{itemize}[leftmargin=0.5cm]
    \item we introduce novel upper bounds for the variational gap, based on the importance sampling technique, which allows us to calculate a tight approximation of $f(\E X)$ for any concave function $f$, and provide their formal (mathematical) justification,
    \item we apply these results for $f=\log$, to provide precise estimates for the evidence (log-likelihood) of data, which we treat as a practical method for validating the effects of training in generative models involving lower bound optimization, and remark both benefits and limitations of our approach,
    \item we perform experiments that confirm our (theoretical) claims and prove the superiority of the proposed bound estimations for the variational gap (see \Cref{tab.gap}), in comparison to the other state-of-the-art techniques (we recall them briefly in \Cref{sec:relwork}, see also \Cref{app:bounds}).
\end{itemize}

\section{RELATED WORK}\label{sec:relwork}

One of the most popular generative, autoencoder-based models is variational autoencoder (VAE)~\citep{kingma2013auto,pmlr-v32-rezende14}, which aims to maximize the log-likelihood of data. However, since this likelihood is intractable, the main idea which stays behind estimating it is to calculate and optimize evidence lower bound (ELBO) instead, which results in appearing the variational gap.

One of the problems with the variational gap is its behavior since it can be tiny or tremendous, depending on the model distribution. The importance of taking care of gaps and their possible offending effects were mentioned in ~\citep{bayer2021mind}. There are several techniques to deal with the variational gap, such as its direct estimation~\citep{abramovich2016some} or finding an upper or lower bound, to know how much we can lose. For example, \citet{nowozin2018debiasing} and \citet{maddison2017filtering} create lower bounds using big-O notation. Bounds for the variational gap were also derived in~\citep{khan2020new} and later used for deriving new inequalities (e.g., bounding Csiszár divergence or converse of the Hölder inequality), as well as in \citep{grosse2015sandwiching}, where bidirectional Monte Carlo simulations were involved.

To our best knowledge, approaches the most related to the results of the present paper were introduced in~\citep{dieng2017variational}, where the authors proposed  the $\chi$ upper bound (CUBO) for the evidence of the data, which was expressed in terms of the $\chi$-divergence, and in~\citep{ji2019stochastic}, where the evidence upper bound (EUBO), involving the Kulback-Leibler divergence, was defined and explored. Following \citet{masrani2019thermodynamic}, it should be noticed that both EUBO and its generalization, i.e., an upper bound variant of the thermodynamic variational objective ($\text{TVO}$), can be derived as the right Riemann sum approximation of the log-likelihood of data expressed via thermodynamic integration method. Thus, we henceforth concentrate on the mentioned bounds when validating our approach, and (for readers' convenience) in \Cref{app:bounds} we provide basic information concerning them.  Nevertheless, we would like to emphasize that, unlike the others,  our approach possesses a more general theoretical background that does not exclude it from other potential applications\footnote{We treat this as motivation for our future work.}.

On the other hand, we can find a broad usage of variational inference not only in the context of generative models (to which we limit our considerations). For example, \citet{toussaint2006probabilistic} use it to present the expectation-maximization algorithm (EM) for computing optimal policies by solving Markov decision processes. Furthermore, \citet{botvinick2012planning} say that people use probabilistic inference when they plan, 
\citet{levine2018reinforcement} uses variational inference to derive a new view of reinforcement learning, where decision-making is an inference problem represented in a type of graphical model, and \citet{duo2021improving} proposes a policy optimization algorithm in the context of variational inference.

\section{THEORETICAL STUDY}

In this section, we present the main theoretical results of the paper. In the first subsection, in \Cref{th:1} we derive a general condition which, under the assumption of concavity, enables us to find an upper bound for the evidence. In the second subsection, inspired by~\citep{burda2015importance}, we describe the importance sampling technique that allows us to decrease the size of the gap by replacing the random variable with the mean of its independent copies.
The third subsection contains the crucial results of the paper (\Cref{th:3.7,th:euboimpc}), in which we provide a large class of upper bounds for the evidence. By respectively, choosing a suitable parameter $C$ in~\Cref{th:euboimpc}, this allows us to obtain for the map $f=\log$ much tighter estimations than those given directly by~\Cref{cor:burdalog} (see \Cref{cor:burdalogimproved} and the experimental results supplied in~\Cref{sec:exp}). In the last subsection, we try to answer the question of where the evidence lies in the interval given by the lower bound and the upper bound. In particular, we show in~\Cref{th:lognormest} that for an important class of log-normal distributions, it is located in the middle of that interval.

\subsection{Variational Gap}

Let $X$ be a random variable. By the classical Jensen inequality, for every concave function $f$ we have $f(\E X)\geq\E f(X)$.
The aim of this subsection is to obtain (under the above general assumptions) an upper bound for $f(\E X)$, which requires computing only the expected value of some random variable that depends on $X$. (Note that such an additional supposition ensures the additivity of the bound when applied for solving optimization problems by machine learning algorithms.) This easily follows from the following theorem, which can also be found in \citep{dragomir2013some} in a more general form. Nevertheless, for completeness, we include the proof in \Cref{app:proofs}.

\begin{theorem} \label{th:1}
Let $f$ be a smooth concave function. Then
\begin{equation}\label{eq:th:1}
\begin{array}{l @{\;} l @{\;} l}
f(\E X) & \leq & \E (f(X)+(Y-X)f'(X)),
\end{array}
\end{equation}
where $X$ and $Y$ are two independent random variables with the same distribution.
\end{theorem}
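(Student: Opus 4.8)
The plan is to use the single most basic fact about smooth concave functions: the graph of $f$ lies below each of its tangent lines. First I would record this as the pointwise inequality
\begin{equation*}
f(y) \;\le\; f(x) + (y-x)f'(x) \qquad \text{for all admissible } x,y,
\end{equation*}
which is just concavity (equivalently, $f'$ is non-increasing, so $f(y)-f(x)=\int_x^y f'(t)\,dt \le (y-x)f'(x)$).

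The key maneuver is to choose the tangency point and the evaluation point asymmetrically. Specialize $y := \E X$, a deterministic number, to get $f(\E X) \le f(x) + (\E X - x)f'(x)$ for every $x$ in the domain. Now substitute $x := X$, the random variable itself, and take expectations over $X$; this yields
\begin{equation*}
f(\E X) \;\le\; \E\bigl(f(X) + (\E X - X)\,f'(X)\bigr).
\end{equation*}

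Finally I would bring in the independent copy $Y$. Since $Y$ has the same law as $X$ we have $\E X = \E Y$, and by independence of $X$ and $Y$, $\E X - X = \E[\,Y - X \mid X\,]$; hence, by the tower property (or Fubini), $\E\bigl[(\E X - X)f'(X)\bigr] = \E\bigl[(Y-X)f'(X)\bigr]$. Plugging this in turns the previous display into exactly \eqref{eq:th:1}, completing the argument.

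The only point requiring care is integrability: one needs $\E|X|<\infty$, $\E|f(X)|<\infty$, and $\E|(Y-X)f'(X)|<\infty$ so that all the expectations are well defined and the exchange of expectation with conditioning is justified; these are the implicit standing assumptions. There is no genuine analytic obstacle here — the entire content is the asymmetric choice (tangency at the random point $X$, evaluation at its mean) followed by symmetrizing the mean into an independent copy, so I would expect the write-up to be short.
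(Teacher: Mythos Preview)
Your proposal is correct and follows essentially the same route as the paper: apply the tangent-line inequality $f(m)\le f(X)+(m-X)f'(X)$ at the random point $X$ with $m=\E X$, integrate, and then use independence of $X,Y$ together with $\E Y=\E X$ to rewrite $\E X\cdot\E f'(X)$ as $\E(Yf'(X))$. The only cosmetic difference is that the paper invokes independence directly as a product of expectations while you phrase it via conditional expectation and the tower property.
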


\Cref{th:1} and Jensen's inequality 
imply that the value of $f(\E X)$ is enclosed in the interval
\begin{equation}\label{eq:vargapint}
\begin{array}{l @{\;} l @{\;} l}
[\E f(X),\, \E f(X)+\E((Y-X) f'(X))].
\end{array}
\end{equation}
Thus, the size of the variational gap is bounded from above by the length of the interval \eqref{eq:vargapint}, i.e., by the value of \mbox{$\E[(Y-X) f'(X)]$}. 

Proceeding to the most important case of $f=\log$, we directly obtain the following corollary.

\begin{corollary}\label{cor:firstlog}
We have
\begin{equation}\label{eq:cor:1}
\begin{array}{l @{\;} l @{\;} l}
\E \log X &\leq & \log \E X \leq \E \log X+ \E\frac{Y}{X}-1,
\end{array}
\end{equation}
where $X$ and $Y$ are independent random variables with the same distribution.
\end{corollary}

\begin{remark}\label{rem:gamma}
Consider, for example\footnote{Here and henceforth, $\text{Gamma}(a,\theta)$ and $\text{Inv-Gamma}(a,\theta)$ denote Gamma and Inverse-Gamma distributions with the shape parameter $a>0$ and the scale parameter $\theta>0$, respectively.}, the random variable $X\sim \text{Gamma}(a,\theta)$ and the concave function $f=\log$. Then  $1/X\sim \text{Inv-Gamma}(a,1/\theta)$. Therefore, assuming $a>1$, we can calculate that
\begin{equation}\label{eq:betasample}
\begin{array}{l @{\;} l @{\;} l}
\E \frac{Y}{X} - 1 &  = & \E Y\, \E  \frac{1}{X} -1
 = \frac{a\theta}{\theta(a-1)}-1=\frac{1}{a-1},
\end{array}
\end{equation} 
which means that in this case, we cannot bound the variational gap effectively when $a$ approaches $1$. Taking into account the properties of the Gamma distribution, this phenomenon means that when we sample from $X$, we obtain arbitrarily small (positive) values more and more likely. In practice, such a situation may appear in a VAE setting, especially when we are dealing with outliers, which is a direct motivation for the improvement introduced in the next subsection.
\end{remark}

\subsection{Reducing Variational Gap}\label{subsec:reducing}

In this subsection, addressing the problem outlined in \Cref{rem:gamma}, we describe how to apply the importance sampling technique to obtain a tighter (additive) approximation of $f(\E X)$. It comes down to using (instead of $X$) the random variable 
\begin{equation}\label{eq:mean}
\begin{array}{l @{\;} l @{\;} l}
\overbar X_k & = & \frac{1}{k}(X_1+\ldots+X_k),
\end{array}
\end{equation}
representing the mean of a $k$-sample from $X$ (consisting of $k$ independent copies of $X$).
Clearly, $\E \overbar X_k=\E X$, 
which implies that $f(\E \overbar X_k)=f(\E X)$.
The following theorem is (in fact) a part of Theorem 1 from \citep{burda2015importance}, restated in a general setting. However, for completeness, we include novel and independent proof.

\begin{theorem}\label{th:burda}
Let $X$ be a random variable and $f$ be a continuous concave function. Then 
for every $k>0$ we have
\begin{equation}\label{eq:th:burda:1}
\begin{array}{l @{\;} l @{\;} l}
\E f(\overbar X_k) & \leq & \E f(\overbar X_{k+1}).
\end{array}
\end{equation}
Moreover, $f(\overbar X_k)$ converges to $f(\E X)$ almost surely, and
\begin{equation}\label{eq:th:burda:2}
\begin{array}{l @{\;} l @{\;} l}
\lim\limits_{k \to \infty} \E f(\overbar X_k) = f(\E X), 
\end{array}
\end{equation}
provided that the support of $X$ is contained in some compact interval lying in the domain of $f$.
\end{theorem}

\begin{proof}
Let $(\Omega,\mu)$ be a probabilistic space and $X\colon \Omega \to \R$ be a random variable. By the concavity of $f$ we directly conclude that
\begin{equation}\label{app_eq:th:burda:3}
\begin{array}{l @{\;} l @{\;} l}
f\left(\frac{1}{k+1}\sum_{i=1}^{k+1}x_i\right) & = & f\left(\frac{1}{k+1}\sum_{i=1}^{k+1}\frac{1}{k} \sum_{j=1,j\neq i}^{k+1} x_j\right)\\
& \geq & \frac{1}{k+1}\sum_{i=1}^{k+1}f\left(\frac{1}{k}\sum_{j=1,j\neq i}^{k+1} x_j\right).
\end{array}
\end{equation}
Then by monotonicity and linearity of the expected value, 
we obtain
\begin{equation}\label{app_eq:th:burda:4}
\begin{array}{@{}l @{\;} l @{\;} l}
\E f(\overbar X_{k+1}) & \geq & \frac{1}{k+1} \sum_{i=1}^{k+1}
\E f(\frac{1}{k} \sum_{j=1,j\neq i}^{k+1} X_j) = \E f(\overbar X_k),
\end{array}
\end{equation}
where $X_1,\ldots, X_{k+1}$ are independent copies of $X$. This gives the first assertion of the theorem.

Now consider the random variable $\overbar X_k$. From the strong law of large numbers, it follows that $\overbar X_k$ converges to $\E X$ almost surely. Since $f$ is continuous, this is also the case for $f(\overbar X_k)$ and $f(\E X)$. Hence, we conclude that $\E f(\overbar X_k)\to f(\E X)$ as $k \to \infty$, whenever the support of $X$ is contained in some compact interval lying in the domain of $f$, which completes the proof.
\end{proof}

Assuming bounded support for $X$ in \Cref{th:burda} we followed (Burda et al., 2015), where the respective theory behind the use of the importance sampling technique is based on this assumption and illustrates the underlying case in a simplified setting. Although, in some cases, the theorem is tending to ``survive'' without this restriction (see, e.g, \Cref{rem:gammasample}), the proof of a more general version would, however, cause some technical difficulties, resulting in reducing the clarity of the presentation.

Applying \eqref{eq:vargapint} to the random variable $\overbar X_n$, we conclude that
\begin{equation}\label{eq:varintred}
\begin{array}{l @{\;} l @{\;} l}
f(\E X) \in [\E f(\overbar X_k),\E f(\overbar X_k)+\E 
((\overbar Y_k-\overbar X_k) f'(\overbar X_k))],
\end{array}
\end{equation}
where $X$ and $Y$ are independent random variables with the same distribution, and $X_1,\ldots,X_k$ and $Y_1,\ldots,Y_k$ are independent copies of $X$ and $Y$, respectively. By~\Cref{th:burda}, the left end of the above interval converges to $f(\E X)$. Hence, a natural question arises, whether the same happens for the right end. In the following corollary, we show that this is the case.

\begin{corollary} \label{cl:3.2}
Let $X$ be a random variable and $f$ be a concave function. Assume also that $(m-x)f'(x)$ is convex in the support of $X$ for arbitrary constant $m$ that belongs to the support of $X$. Then the width
of the interval given in \eqref{eq:varintred}, i.e., $\E ((\overbar Y_k-\overbar X_k) f'(\overbar X_k))$, is a decreasing sequence with $k$.
Moreover, if the support of $X$ is contained in the closed bounded interval lying in the domain of $f'$, then the limit is 0.
\end{corollary}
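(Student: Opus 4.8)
The plan is to recast the width
\[
w_k \;:=\; \E\big((\overbar Y_k-\overbar X_k)\,f'(\overbar X_k)\big)
\]
into a form to which \Cref{th:burda} applies directly. Writing $\mu:=\E X=\E\overbar Y_k$ and using that $\overbar Y_k$ is independent of $\overbar X_k$, we have $\E(\overbar Y_k f'(\overbar X_k))=\mu\,\E f'(\overbar X_k)$, hence
\[
w_k \;=\; \E\big((\mu-\overbar X_k)\,f'(\overbar X_k)\big) \;=\; \E\, g_\mu(\overbar X_k),
\qquad\text{where}\quad g_m(x):=(m-x)f'(x).
\]
So it suffices to show that $g_\mu$ is convex: then $-g_\mu$ is concave and continuous (as $f$ is smooth), and the two parts of \Cref{th:burda} applied to $-g_\mu$ in place of $f$ give exactly the two assertions of the corollary.

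To see that $g_\mu$ is convex, note that $g_m(x)=m f'(x)-x f'(x)$ is affine in the parameter $m$, only the term $m f'(x)$ depending on $m$. Consequently, if $g_{m_1}$ and $g_{m_2}$ are convex, then so is $g_{\lambda m_1+(1-\lambda)m_2}=\lambda g_{m_1}+(1-\lambda)g_{m_2}$ for $\lambda\in[0,1]$; that is, the set $M:=\{m: g_m\text{ is convex}\}$ is convex. It is also closed, since convexity is preserved under pointwise limits and $m\mapsto g_m(x)$ is continuous for each fixed $x$. By hypothesis $M$ contains the support of $X$; being closed and convex it therefore contains the closed convex hull of that support, and in particular $\mu=\E X\in M$. (Equivalently, one may write $g_\mu(x)=\E_{m\sim X}\,g_m(x)$, an average of convex functions.)

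Now I apply \Cref{th:burda} to the concave continuous function $-g_\mu$. The monotonicity part gives $\E(-g_\mu(\overbar X_k))\le \E(-g_\mu(\overbar X_{k+1}))$, i.e.\ $w_{k+1}\le w_k$, so $(w_k)$ is decreasing. If, in addition, the support of $X$ lies in a closed bounded interval contained in the domain of $f'$ (hence of $g_\mu$), the limit part of \Cref{th:burda} yields $\lim_{k\to\infty}\E(-g_\mu(\overbar X_k))=-g_\mu(\E X)=-(\mu-\mu)f'(\mu)=0$, whence $w_k\to 0$.

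The points I am glossing over are routine: the implicit integrability assumptions (finiteness of $\mu$ and of the relevant expectations, automatic under bounded support) and the continuity of $g_\mu$, immediate from the smoothness of $f$. The one genuinely delicate step is the convexity claim: the hypothesis is imposed for $m$ ranging over the support of $X$, while what we actually need is convexity of $g_\mu$ for $\mu=\E X$, a value that need not belong to the support at all — it is precisely the affine dependence of $g_m$ on $m$ that bridges this gap by letting us pass from the support to its closed convex hull.
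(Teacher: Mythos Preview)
Your proof is correct and follows the same route as the paper, which simply says ``apply \Cref{th:burda} to the concave function $-(\E X-x)f'(x)$.'' You recover exactly this after rewriting $w_k=\E g_\mu(\overbar X_k)$ and arguing that $-g_\mu$ is concave.

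The one place you go beyond the paper is worth noting: the hypothesis asserts convexity of $(m-x)f'(x)$ only for $m$ in the support of $X$, whereas what is needed is convexity at $m=\mu=\E X$, which need not lie in the support. The paper's one-line proof tacitly assumes this, while you close the gap cleanly via the affine dependence of $g_m$ on $m$ (equivalently, $g_\mu=\E_{m\sim X}g_m$ is a mixture of convex functions). This is a genuine improvement in rigor over the paper's presentation, not a different method.
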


\begin{proof}
It is enough to apply~\Cref{th:burda} for the concave function: $(x-\E X)f'(x)$.
\end{proof}
Now let us get back to the case of $f=\log$, where we directly obtain the following \mm{corollary.}

\begin{corollary}\label{cor:burdalog}
We have
\begin{equation}\label{eq:burdalog}
\begin{array}{l @{\;} l @{\;} l}
\E \log \overbar X_k &\leq & \log \E X \leq \E \log \overbar X_k+ \E\frac{\overbar Y_k}{\overbar X_k}-1.
\end{array}
\end{equation}
Moreover, $\E\frac{\overbar Y_k}{\overbar X_k}$ is a decreasing sequence with $k$, which converges to $1$, provided that the support of $X$ lies in some compact interval contained in $(0,\infty)$.
\end{corollary}

\begin{remark}\label{rem:gammasample}
As we have already outlined, in some cases we can use \Cref{th:burda} (and hence \Cref{cl:3.2,cor:burdalog}) to make the size of a variational gap \mm{arbitrarily} small, even if we cannot respectively bound values of $X$ almost surely.
Indeed, if we return to the example from \Cref{rem:gamma}, we easily see that $\overbar X_k\sim \text{Gamma}(ka,\theta/k)$ and $1/(\overbar X_k)\sim \text{Inv-Gamma}(ka,k/\theta)$. Therefore, assuming $ka>1$, we can calculate that
\begin{equation}\label{eq:betasample1}
\begin{array}{l @{\;} l @{\;} l}
\E \frac{\overbar Y_k}{\overbar X_k} -1  &  = & \frac{a\theta k}{\theta(ka-1)}-1=\frac{1}{ka-1}\xrightarrow{k\to \infty} 0.
\end{array}
\end{equation}
However, note that even though this means that we can find $k$ large enough to decrease a variational gap sufficiently, we  also see that when the value of $a$ approaches 0, we may be forced to wait for such an effect quite a long while increasing the value of $k$.
This is a direct motivation for the improvement introduced in the next subsection.
\end{remark}

\subsection{Improved Bounds for Variational Gap}

In this subsection, we provide another technique, which is crucial in the estimation of the size of the variational \mm{gap and} addresses the problem described in \Cref{rem:gammasample}.
Although we start with a general proposition, the idea of which lies in generalizing the \mm{bounds} obtained by the concavity, we eventually fix our attention on the case $f=\log$.

\begin{proposition} \label{pr:1}
Assume that $f$, $g$, and $h$ are arbitrary functions such that 
\begin{equation}\label{eq:pr:1:1}
\begin{array}{l @{\;} l @{\;} l}
f(a) & \leq & g(x)+a h(x) \text{ for every } a \text{ and }x.
\end{array}
\end{equation}
Then 
\begin{equation}\label{eq:pr:1:2}
\begin{array}{l @{\;} l @{\;} l}
f(\E X) & \leq & \E (g(X)+Y h(X)),
\end{array}
\end{equation}
where $X$ and $Y$ are two independent random variables with the same distribution.
\end{proposition}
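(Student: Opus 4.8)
The plan is to reduce the probabilistic statement to the pointwise hypothesis \eqref{eq:pr:1:1} by one well-chosen substitution followed by integration. First I would freeze the free variable $a$ in \eqref{eq:pr:1:1} at the value $a=\E X$ (which we tacitly assume exists and is finite). This gives, for every $x$,
\[
f(\E X) \;\le\; g(x) + (\E X)\, h(x).
\]
Since the left-hand side is now a constant in $x$, I can replace $x$ by the random variable $X$ and take the expectation over $X$, using linearity, to obtain
\[
f(\E X) \;\le\; \E\big(g(X)\big) + (\E X)\,\E\big(h(X)\big).
\]

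The second step is to rewrite the product $(\E X)\,\E(h(X))$ as the expectation of a single random variable involving the independent copy $Y$. Because $Y$ has the same distribution as $X$ we have $\E Y=\E X$, and because $Y$ is independent of $X$ the product factorizes: $\E\big(Y\,h(X)\big)=\E Y\cdot\E h(X)=(\E X)\,\E h(X)$. Substituting this identity back and using linearity once more yields
\[
f(\E X) \;\le\; \E\big(g(X)\big) + \E\big(Y\,h(X)\big) \;=\; \E\big(g(X)+Y\,h(X)\big),
\]
which is exactly \eqref{eq:pr:1:2}.

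I do not expect a genuine obstacle here: the hypothesis \eqref{eq:pr:1:1} is invoked only once, at the single point $a=\E X$, and everything else is linearity of expectation together with the independence of $X$ and $Y$. The only points deserving a line of care are the standing integrability assumptions --- that $\E X$, $\E g(X)$, $\E h(X)$ be finite, so that no $\infty-\infty$ indeterminacy arises --- and the appeal to independence (Fubini) to justify $\E(Y h(X))=\E Y\,\E h(X)$; under the implicit hypotheses of the statement these are routine. It is also worth noting, as a sanity check, that \Cref{pr:1} generalizes \Cref{th:1}: for a smooth concave $f$ one takes $g(x)=f(x)-x f'(x)$ and $h(x)=f'(x)$, so that the tangent-line inequality $f(a)\le f(x)+(a-x)f'(x)$ is precisely \eqref{eq:pr:1:1}, and \eqref{eq:pr:1:2} then collapses to \eqref{eq:th:1}.
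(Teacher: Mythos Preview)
Your argument is correct and matches the paper's proof essentially step for step: fix $a=m=\E X$ in the pointwise inequality, integrate, and then use independence and equality of distribution to rewrite $(\E X)\,\E h(X)$ as $\E(Yh(X))$. The additional remarks on integrability and on recovering \Cref{th:1} as a special case are accurate and welcome.
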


\begin{proof}
The proof follows the same lines as the proof of~\Cref{th:1} \mm{(see \Cref{app:proofs})}.
We consider the probabilistic space $(\Omega,\mu)$ and independent random variables $X,Y\colon \Omega \to \R$ with the same distribution. We use the notation
\begin{equation}\label{eq:pr:1:3}
\begin{array}{l @{\;} l @{\;} l}
m & = &\E X=\int_{\Omega} X\, d\mu.
\end{array}
\end{equation}
Clearly $\E X=\E Y$.
Observe that, by the assumptions, for every $\omega \in \Omega$ we have
\begin{equation}\label{eq:pr:1:4}
\begin{array}{l @{\;} l @{\;} l}
f(m) & \leq & g(X(\omega))+m h(X(\omega)).
\end{array}
\end{equation}
Integrating the above formula over all $\omega \in \Omega$, we get
\begin{equation}\label{eq:pr:1:5}
\begin{array}{l @{\;} l @{\;} l}
f(\E X) &= &\int_{\Omega} f(m)\, d\mu 
\leq \int_{\Omega} g(X(\omega))+m h(X(\omega)) \,d\mu\\[2mm]
& = & \E g(X)+\E Y  \E  h(X) = \E(g(X)+Yh(X)),
\end{array}
\end{equation}
which ends the proof.
\end{proof}

Now we focus our attention on the case when $f=\log$. We prove that given an arbitrary function $g$, we can easily compute the optimal $h$.

\begin{lemma}\label{lem:1}
Let $g\colon (0,\infty) \to \R$ be an arbitrary function. Then
\begin{equation}\label{eq:lem:1:1}
\begin{array}{l @{\;} l @{\;} l}
\log a & \leq & g(x)+a \exp(-g(x)-1) \text{ for all }a,x>0.
\end{array}
\end{equation}
Moreover, for any function $h\colon (0,\infty) \to \R$ satisfying
\begin{equation}\label{eq:lem:1:2}
\begin{array}{l @{\;} l @{\;} l}
\log a & \leq & g(x)+a h(x),
\end{array}
\end{equation}
we have 
\begin{equation}\label{eq:lem:1:3}
\begin{array}{l @{\;} l @{\;} l}
h(x) & \geq & \exp(-g(x)-1).
\end{array}
\end{equation}
\end{lemma}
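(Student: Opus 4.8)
The plan is to prove the two claims of \Cref{lem:1} separately, the first by a direct calculus argument and the second by optimizing over the free variable $a$.

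For the inequality \eqref{eq:lem:1:1}, I would fix $x>0$, write $c=g(x)\in\R$, and consider the function $\varphi(a)=c+a\exp(-c-1)-\log a$ on $(0,\infty)$. I would show $\varphi(a)\geq 0$ for all $a>0$ by the standard recipe: $\varphi'(a)=\exp(-c-1)-1/a$ vanishes exactly at $a^*=\exp(c+1)$, and since $\varphi''(a)=1/a^2>0$ the function is strictly convex, so $a^*$ is the global minimizer. Evaluating, $\varphi(a^*)=c+\exp(c+1)\exp(-c-1)-\log(\exp(c+1))=c+1-(c+1)=0$. Hence $\varphi(a)\geq\varphi(a^*)=0$, which is exactly \eqref{eq:lem:1:1}. (Equivalently, one can invoke the elementary inequality $\log a\leq a/e^{} \cdot$ shifted appropriately, or the tangent-line bound $\log t\leq t-1$ applied to $t=a\exp(-c-1)$, which gives $\log a-(c+1)\leq a\exp(-c-1)-1$ and rearranges to the claim; I would likely present this one-line version since it is cleanest.)

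For the optimality statement \eqref{eq:lem:1:3}, suppose $h\colon(0,\infty)\to\R$ satisfies \eqref{eq:lem:1:2}, i.e.\ $\log a\leq g(x)+a\,h(x)$ for all $a,x>0$. Fix $x>0$ and set $c=g(x)$. First, $h(x)$ must be positive: if $h(x)\leq 0$ then the right-hand side $c+a\,h(x)$ is bounded above (by $c$) as $a\to\infty$ while $\log a\to\infty$, a contradiction. So $h(x)>0$, and I may plug in the specific value $a=\exp(-c)/h(x)>0$. This is the maximizer of $\log a-a\,h(x)$ over $a>0$ (derivative $1/a-h(x)=0$), making the bound tightest. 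Substituting gives $-c-\log h(x)\leq c + \exp(-c)$, wait—let me just carry it through: $\log a = -c-\log h(x)$ and $a\,h(x)=\exp(-c)$, so the inequality $\log a\leq c+a\,h(x)$ becomes $-c-\log h(x)\leq c+\exp(-c)$; this does not immediately give the clean bound, so instead I would choose $a$ to saturate: the sharpest constraint comes from $a$ where $\log a - a h(x)$ is maximal, and comparing the maximum of the left-side-minus-linear-part. Concretely, from $\log a - a h(x)\leq c$ for all $a>0$, take the supremum of the left side over $a$; it is attained at $a=1/h(x)$ with value $-\log h(x)-1$. Thus $-\log h(x)-1\leq c=g(x)$, i.e.\ $\log h(x)\geq -g(x)-1$, i.e.\ $h(x)\geq\exp(-g(x)-1)$, which is \eqref{eq:lem:1:3}.

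The only mild subtlety — the part I would be most careful about — is the very short argument that $h(x)>0$ (equivalently, that $\sup_{a>0}(\log a - a\,h(x))$ is finite), since without it the supremum step is vacuous; everything else is a one-variable extremum computation. I would organize the write-up as: (i) tangent-line proof of \eqref{eq:lem:1:1}; (ii) the observation $h(x)>0$; (iii) taking $\sup_{a>0}$ in \eqref{eq:lem:1:2} to deduce \eqref{eq:lem:1:3}.
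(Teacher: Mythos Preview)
Your proof is correct and uses essentially the same idea as the paper: fix $x$, treat the inequality as a one-variable constraint in $a$, and optimize. The only organizational difference is that the paper rearranges \eqref{eq:lem:1:2} as $h(x)\geq(\log a-g(x))/a$ and computes $\sup_{a>0}(\log a-g(x))/a=\exp(-g(x)-1)$ once, which simultaneously yields both \eqref{eq:lem:1:1} (take $h(x)=\exp(-g(x)-1)$) and \eqref{eq:lem:1:3}, and incidentally bypasses the need to argue $h(x)>0$; your version splits the two claims and takes the supremum of $\log a-a\,h(x)$ instead, which is equally valid but requires that small positivity check.
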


\begin{proof}
Consider an arbitrary function $h$. Let $x>0$ be fixed. We are going to find an equivalent condition for $h$ so that 
\begin{equation} \label{eq:a}
\begin{array}{l @{\;} l @{\;} l}
\log a & \leq & g(x)+a h(x) \text{ for all }a>0.
\end{array}
\end{equation}
Note that verifying \eqref{eq:a} is equivalent to checking whether
\begin{equation}\label{eq:lem:1:4}
\begin{array}{l @{\;} l @{\;} l}
h(x) & \geq & \frac{\log a - g(x)}{a} \text{ for all }a>0,
\end{array}
\end{equation}
or, equivalently,
\begin{equation}\label{eq:lem:1:5}
\begin{array}{l @{\;} l @{\;} l}
h(x) & \geq & \sup_{a>0}\frac{\log a - g(x)}{a}.
\end{array}
\end{equation}
One can easily check that if $w\colon (0,\infty)\to \R$ is a function defined as
\begin{equation}\label{eq:lem:w}
\begin{array}{l @{\;} l @{\;} l}
w(a) & = & \frac{\log a - g(x)}{a},
\end{array}
\end{equation}
 then \mbox{$w'(a)=\frac{1-(\log a -g(x))}{a^2}$}. Consequently, $w$ reaches the maximal value at \mbox{$a_x=\exp(1+g(x))$}. Thus, the equivalent condition for $h$ to satisfy \eqref{eq:a} is
\begin{equation}\label{eq:lem:1:6}
\begin{array}{l @{\;} l @{\;} l}
h(x) & \geq & w(a_x)=\exp(-1-g(x)),
\end{array}
\end{equation}
which proves all assertions.
\end{proof}

As a direct consequence of~\Cref{pr:1} and~\Cref{lem:1} we obtain the following theorem.

\begin{theorem} \label{th:3.7}
Let $g$ be an arbitrary function.
Then 
\begin{equation}\label{eq:euboimp}
\begin{array}{l @{\;} l @{\;} l}
\log\E X &\leq &\E (g(X)+Y \exp(-g(X)-1)),
\end{array}
\end{equation}
where $X$ and $Y$ are two positive independent random variables with the same distribution.
\end{theorem}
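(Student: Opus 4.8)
The plan is to obtain the statement as an immediate combination of \Cref{lem:1} and \Cref{pr:1}, so the ``proof'' is really just a matter of matching up hypotheses. First I would invoke \Cref{lem:1} with the given function $g$: it asserts precisely that
\begin{equation}
\log a \;\leq\; g(x) + a\exp(-g(x)-1) \qquad \text{for all } a,x>0,
\end{equation}
which is the pointwise majorization \eqref{eq:lem:1:1}. Setting $f=\log$, keeping $g$ as is, and defining $h(x) := \exp(-g(x)-1)$, this inequality is exactly the hypothesis \eqref{eq:pr:1:1} of \Cref{pr:1}, at least on the positive half-line.

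Next I would apply \Cref{pr:1} to these $f$, $g$, $h$ and to the pair $X$, $Y$ of positive i.i.d.\ random variables. Its conclusion \eqref{eq:pr:1:2} reads $f(\E X) \leq \E(g(X) + Y h(X))$, which upon substituting $f=\log$ and $h(x)=\exp(-g(x)-1)$ is precisely \eqref{eq:euboimp}. That finishes the argument.

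The only point requiring a word of care is a domain/integrability bookkeeping issue rather than a genuine obstacle: because $X$ (and $Y$) are assumed positive, $\E X$ lies in $(0,\infty)$, so $\log \E X$ is well defined, and the arguments $x$ fed into $g$ and $h$ always lie in $(0,\infty)$, so \Cref{lem:1} applies verbatim. I would also note implicitly (as in \Cref{pr:1}) that the expression on the right-hand side is interpreted as an expectation, possibly $+\infty$, in which case the inequality holds trivially; when it is finite the computation in the proof of \Cref{pr:1} goes through unchanged, using independence of $X$ and $Y$ to split $\E(Y h(X)) = \E Y \cdot \E h(X)$ and $\E Y = \E X$. Hence no additional assumptions beyond positivity are needed, and the theorem follows.
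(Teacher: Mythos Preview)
Your proposal is correct and matches the paper's own approach exactly: the paper states Theorem~\ref{th:3.7} as a direct consequence of Proposition~\ref{pr:1} and Lemma~\ref{lem:1}, which is precisely the combination you describe. Your additional remarks on domain and integrability are reasonable bookkeeping but go slightly beyond what the paper spells out.
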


Now consider a one-parameter family of functions
\begin{equation}\label{eq:gc}
\begin{array}{l @{\;} l @{\;} l}
g_C(x) &= &\log x-1+C \; (C\in \R).
\end{array}
\end{equation}
Then by applying any function $g_C$ as $g$ in~\Cref{pr:1}, we obtain the following theorem.

\begin{theorem}\label{th:euboimpc}
Let $C$ be arbitrarily chosen. Then 
\begin{equation}\label{eq:th:euboimpc:1}
\begin{array}{l @{\;} l @{\;} l}
\E \log X \leq \log \E X &\leq & \E \log X-1+C+\exp(-C) \E\frac{Y}{X},
\end{array}
\end{equation}
where $X$ and $Y$ are two independent positive random variables with the same distribution.
\end{theorem}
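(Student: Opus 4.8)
The plan is to obtain \Cref{th:euboimpc} as an immediate specialization of \Cref{th:3.7}: one simply substitutes $g=g_C$ into \eqref{eq:euboimp} and simplifies. First I would recall that \Cref{th:3.7} gives, for an arbitrary function $g$ and positive i.i.d.\ $X,Y$,
\[
\log\E X \leq \E\bigl(g(X)+Y\exp(-g(X)-1)\bigr).
\]
Taking $g=g_C$ with $g_C(x)=\log x-1+C$, the exponential factor collapses nicely: since $-g_C(x)-1=-\log x-C$, we have $\exp(-g_C(x)-1)=e^{-C}/x$. Substituting this on the right-hand side and using linearity of expectation (the additive constant $-1+C$ and the scalar $e^{-C}$ pull out of the expectation) yields
\[
\E\Bigl(\log X-1+C+Y\,\tfrac{e^{-C}}{X}\Bigr)=\E\log X-1+C+e^{-C}\,\E\tfrac{Y}{X},
\]
which is exactly the asserted bound \eqref{eq:th:euboimpc:1}.

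Equivalently — and this is the route suggested by the phrase ``by applying any function $g_C$ as $g$ in Proposition \ref{pr:1}'' — I could go through \Cref{pr:1} directly: verify that the pair $g_C(x)=\log x-1+C$ and $h_C(x)=e^{-C}/x$ satisfies the hypothesis \eqref{eq:pr:1:1}, namely $\log a\leq \log x-1+C+a\,e^{-C}/x$ for all $a,x>0$. This is precisely \eqref{eq:lem:1:1} of \Cref{lem:1} applied to $g=g_C$ (alternatively, a one-line calculus check: the difference of the two sides, as a function of $a$, is minimized at $a=xe^{C}$, where it equals $0$). Then \Cref{pr:1} with $f=\log$, $g=g_C$, $h=h_C$ gives the conclusion.

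There is no genuine obstacle here; the result is essentially a substitution. The only points deserving a word of care are that $X$ (and hence $\overbar{X}_k$ in later applications) must be positive so that $\log X$ and $1/X$ are well-defined — this is part of the hypothesis — and that one should keep the bound in the form $e^{-C}\E\frac{Y}{X}$ as written, factoring it further as $e^{-C}\,\E Y\cdot\E\frac1X$ only if one wishes to exploit independence explicitly (which is not needed for the statement itself).
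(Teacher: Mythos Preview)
Your proposal is correct and matches the paper's approach exactly: the paper states that \Cref{th:euboimpc} follows ``by applying any function $g_C$ as $g$ in Proposition~\ref{pr:1},'' and you carry this out in detail, supplying the companion function $h_C(x)=e^{-C}/x$ (equivalently, specializing \Cref{th:3.7} to $g=g_C$) and simplifying. There is nothing to add.
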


Observe that \mm{increasing $C$ decreases} the last component of the \mm{right-hand} side formula  \mm{in \eqref{eq:th:euboimpc:1}}. Hence, the optimal value of $C$ (i.e., minimizing the upper bound for $\log \E X$) can be obtained as the one that minimizes the function $W(C)=C+\exp(-C) \E\frac{Y}{X}$. This easily leads us to the following corollary.

\begin{corollary} \label{cl:1}
Under the assumptions of~\Cref{th:euboimpc}, the optimal value of $C$ is $C=\log \E\frac{Y}{X}$, which gives the following estimates:
\begin{equation}\label{eq:col:euboimpc:1}
\begin{array}{l @{\;} l @{\;} l}
\E \log X & \leq & \log \E X \leq \E \log X +\log \E \frac{Y}{X}.
\end{array}
\end{equation}
\end{corollary}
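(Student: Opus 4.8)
The plan is to treat the right-hand side of \eqref{eq:th:euboimpc:1} as a function of the free parameter $C$ and minimize over it. Since $\E\log X$ and the constant $-1$ do not depend on $C$, minimizing the bound is equivalent to minimizing $W(C)=C+\exp(-C)\,\E\frac{Y}{X}$ over $C\in\R$. Before optimizing I would dispose of the degenerate cases: because $X$ and $Y$ are positive we have $\E\frac{Y}{X}>0$, so $\log\E\frac{Y}{X}$ is well defined; and if $\E\frac{Y}{X}=\infty$ the asserted upper bound is $+\infty$ and there is nothing to prove, so we may assume $0<\E\frac{Y}{X}<\infty$.

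Next I would differentiate: $W'(C)=1-\exp(-C)\,\E\frac{Y}{X}$ and $W''(C)=\exp(-C)\,\E\frac{Y}{X}>0$, so $W$ is strictly convex and attains its unique global minimum at the solution of $W'(C)=0$, namely $C^\ast=\log\E\frac{Y}{X}$. Substituting back gives $W(C^\ast)=\log\E\frac{Y}{X}+\exp(-C^\ast)\,\E\frac{Y}{X}=\log\E\frac{Y}{X}+1$. Feeding $C=C^\ast$ into \eqref{eq:th:euboimpc:1} then yields
\[
\log\E X\;\le\;\E\log X-1+W(C^\ast)\;=\;\E\log X+\log\E\tfrac{Y}{X},
\]
which is the right-hand inequality in \eqref{eq:col:euboimpc:1}; the left-hand inequality is just Jensen's inequality for the concave map $\log$.

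Honestly, there is no real obstacle here: the corollary reduces to a one-variable convex optimization, and \Cref{th:euboimpc} already supplies the family of bounds being optimized. The only points needing a moment's care are the edge cases noted above — positivity of $\E\frac{Y}{X}$ (so that $C^\ast$ exists) and triviality when $\E\frac{Y}{X}=\infty$ — together with the tacit assumption, inherited from \Cref{th:euboimpc}, that $\E\log X$ is finite so that the displayed chain of inequalities is meaningful.
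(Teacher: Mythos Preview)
Your proposal is correct and follows exactly the approach the paper sketches: the paper states that the optimal $C$ is the minimizer of $W(C)=C+\exp(-C)\,\E\frac{Y}{X}$ and that this ``easily leads to'' the corollary, and you have simply carried out that one-variable convex minimization and substituted back into \eqref{eq:th:euboimpc:1}. Your handling of the edge cases ($\E\frac{Y}{X}>0$ and $\E\frac{Y}{X}=\infty$) is a nice addition that the paper leaves implicit.
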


Note that although the upper bound for $\log \E X$ provided by \eqref{eq:col:euboimpc:1} does not have an additive form, it tells us about the most optimal estimation we can obtain. Moreover, in this case, we can also apply the importance sampling technique resulting in reducing the variational gap, which is an immediate consequence of \Cref{cl:1,cor:burdalog}.

\begin{theorem}\label{cor:burdalogimproved}
We have
\begin{equation}\label{eq:burdalogimproved}
\begin{array}{l @{\;} l @{\;} l}
\E \log \overbar X_k &\leq & \log \E X \leq \E \log \overbar X_k+ \log \E\frac{\overbar Y_k}{\overbar X_k}.
\end{array}
\end{equation}
Moreover, $\log \E\frac{\overbar Y_k}{\overbar X_k}$ is a decreasing sequence with $k$, which converges to $0$, provided that the support of $X$ lies in some compact interval contained in $(0,\infty)$.
\end{theorem}
\begin{remark}\label{rem:betasamplelog}
Continuing the example involving the Gamma distributed random variable $X$ (see \Cref{rem:gamma,rem:gammasample}), which goes beyond the assumptions of \Cref{cor:burdalogimproved}, when $ka>1$ we have
\begin{equation}\label{eq:betasamplelog}
\begin{array}{l @{\;} l @{\;} l}
\log \E \frac{\overbar Y_k}{\overbar X_k}  &  = & \log \frac{ka}{ka-1}\xrightarrow{k\to \infty} 0.
\end{array}
\end{equation}
\mm{This means that in this case, the last conclusion of \Cref{cor:burdalogimproved} ``survives'', too.}
\end{remark}

\subsection{Quality of Estimations}

We have already proved (see \eqref{eq:vargapint}) that for any concave function~$f$ we have
\begin{equation}\label{eq:varint1}
\begin{array}{l @{\;} l @{\;} l}
f(\E X) & \in & [\E f(X), \E (f(X)+(Y-X)f'(X))],
\end{array}
\end{equation}
where $X$ and $Y$ are independent random variables with the same distribution. In this section, we are going to show that the optimal choice for an approximation of $f(\E X)$ is the middle of the above interval.

Let us start with the following general result, which relates to a special case of the delta method~\citep[][Section~5.3.1]{bickel2015mathematical}. For completeness, we include the proof in \mm{\Cref{app:proofs}}.

\begin{theorem}\label{th:approx}
Assume that $f$ is a smooth function. \mm{Let $X$ and $Y$ be} independent random variables with the same distribution, which attain only values $\varepsilon$-close to $\E X$, where $\varepsilon>0$ is small. Then
\begin{equation}\label{eq:th:approx:1}
\begin{array}{@{}l @{\;} l @{\;} l}
f(\E X) & = & E f(X)+\frac{1}{2} \E((Y-X)f'(X))+o(\varepsilon^2).
\end{array}
\end{equation}
\end{theorem}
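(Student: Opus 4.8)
The plan is to expand $f$ in a Taylor series around the common value $m = \E X$, exploiting the hypothesis that both $X$ and $Y$ take values only within distance $\varepsilon$ of $m$. Writing $X = m + \Delta_X$ and $Y = m + \Delta_Y$ with $|\Delta_X|, |\Delta_Y| \le \varepsilon$, I would first expand $f(X)$ to second order: $f(X) = f(m) + f'(m)\Delta_X + \tfrac12 f''(m)\Delta_X^2 + o(\varepsilon^2)$, where the error term is uniform because $f$ is smooth and all arguments lie in a fixed compact neighbourhood of $m$. Taking expectations and using $\E \Delta_X = 0$ gives $\E f(X) = f(m) + \tfrac12 f''(m)\,\E\Delta_X^2 + o(\varepsilon^2)$.

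Next I would expand $f'(X)$ to first order, $f'(X) = f'(m) + f''(m)\Delta_X + o(\varepsilon)$, and multiply by $(Y - X) = \Delta_Y - \Delta_X$. Taking the expectation and using independence together with $\E\Delta_Y = \E\Delta_X = 0$, the $f'(m)(\Delta_Y - \Delta_X)$ term vanishes in expectation, and $\E[(\Delta_Y - \Delta_X)\Delta_X] = \E\Delta_Y\,\E\Delta_X - \E\Delta_X^2 = -\E\Delta_X^2$. Hence $\E((Y-X)f'(X)) = -f''(m)\,\E\Delta_X^2 + o(\varepsilon^2)$. Here one must be slightly careful that the $o(\varepsilon)$ remainder from the expansion of $f'$, once multiplied by the $O(\varepsilon)$ factor $(Y-X)$, contributes only $o(\varepsilon^2)$; this is where the uniform smoothness of $f$ on the compact neighbourhood is used again.

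Combining the two displays, $\E f(X) + \tfrac12 \E((Y-X)f'(X)) = f(m) + \tfrac12 f''(m)\E\Delta_X^2 - \tfrac12 f''(m)\E\Delta_X^2 + o(\varepsilon^2) = f(m) + o(\varepsilon^2)$, which is exactly \eqref{eq:th:approx:1} since $f(m) = f(\E X)$. The main obstacle is purely a matter of bookkeeping the remainder terms rigorously: one needs that the second-order Taylor remainder for $f$ is $O(\varepsilon^3) = o(\varepsilon^2)$ uniformly over the support (which follows from boundedness of $f'''$, or more carefully from continuity of $f''$ together with $|\Delta| \le \varepsilon$ and the mean-value form of the remainder), and that the products of remainders with the bounded centred increments stay within $o(\varepsilon^2)$. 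Since the statement is explicitly flagged as a special case of the delta method and the proof is deferred to the supplement, I would present exactly this Taylor-expansion argument there, being explicit about the uniformity of the error estimates.
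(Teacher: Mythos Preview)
Your proof is correct and follows essentially the same Taylor-expansion (delta-method) argument as the paper. The only organisational difference is that the paper centres its expansions at the random point $X(\omega)$ (writing $f(m)=f(X)+(m-X)f'(X)+\tfrac12(m-X)^2f''(X)+o(\varepsilon^2)$ and then substituting $f'(m)=f'(X)+(m-X)f''(X)+o(\varepsilon)$ to collapse the second-order term before integrating), whereas you centre at the fixed point $m$ and compute $\E f(X)$ and $\E[(Y-X)f'(X)]$ separately; both routes involve the same estimates and the same care with the $o(\varepsilon^2)$ remainders.
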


\mm{Even though the assumptions of \Cref{th:approx} are somewhat unrealistic, it was formulated to improve readers' intuition. Note that the importance sampling technique leads to random variables being more and more concentrated around their means, which we utilize in our experiments using sufficiently large samples.}

Now we proceed to the case when $f=\log$ and we are going to apply the bounds for $\overbar{X}_k$.
Clearly, by the central limit theorem, for large $k$ the distribution of $\overbar{X}_k$ can be considered Gaussian. However, this approximation is not satisfactory from our point of view, since we are limited to the class of positive random variables (which \mm{are proper} arguments for the $\log$ function). Based on the results of \citep{mouri2013log},
it is known that typically the distribution of the mean of independent positive random variables can be better approximated by the log-normal distribution $\mathcal{LN}$. In the other words, we can write
$\overbar{X}_k \approx \mathcal{LN}(m,\sigma)$. In the following theorem, we prove that for a log-normal random variable $X$, the value of $\log \E X$ lies exactly in the middle of the interval given by~\Cref{cl:1}.

\begin{theorem}\label{th:lognormest}
Let $X$ and $Y$ be independent random variables with the same log-normal distribution. Then 
\begin{equation}\label{eq:lognormest}
\begin{array}{l @{\;} l @{\;} l}
\log \E X & = & \E \log X+\frac{1}{2} \log \E \frac{Y}{X}.
\end{array}
\end{equation}
\end{theorem}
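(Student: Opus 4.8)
The plan is to reduce everything to the classical moment formulas for the log-normal law. Write $X = e^{Z}$ with $Z \sim \mathcal{N}(\mu,\sigma^2)$, and since $Y$ has the same distribution, write $Y = e^{W}$ with $W \sim \mathcal{N}(\mu,\sigma^2)$ independent of $Z$. The three quantities appearing in \eqref{eq:lognormest} are then: $\log \E X$, $\E \log X$, and $\log \E \frac{Y}{X}$.

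First I would compute $\E \log X = \E Z = \mu$ directly. Next, using the standard log-normal mean formula $\E X = \E e^{Z} = e^{\mu + \sigma^2/2}$, I get $\log \E X = \mu + \tfrac{1}{2}\sigma^2$. Finally, by independence of $W$ and $Z$, $\E \frac{Y}{X} = \E e^{W} \cdot \E e^{-Z} = e^{\mu + \sigma^2/2}\cdot e^{-\mu + \sigma^2/2} = e^{\sigma^2}$, so $\log \E \frac{Y}{X} = \sigma^2$. Substituting into the right-hand side of \eqref{eq:lognormest} gives $\mu + \tfrac{1}{2}\sigma^2$, which matches $\log \E X$, completing the proof.

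The only mild subtlety — really the ``main obstacle,'' though it is routine — is invoking the moment generating function of the normal distribution at the arguments $+1$ and $-1$ (equivalently, $\E e^{tZ} = e^{\mu t + \sigma^2 t^2/2}$ at $t = \pm 1$), and making sure the independence of $X$ and $Y$ is used to split $\E \frac{Y}{X}$ into a product of expectations. One should also note implicitly that $X,Y>0$ almost surely, so $\log X$ and $Y/X$ are well-defined and all the expectations are finite. No appeal to the earlier theorems is needed here; the result is a self-contained verification that the interval of Corollary~\ref{cl:1} is exactly bisected by the true value $\log \E X$ in the log-normal case, which is exactly the ``middle of the interval'' phenomenon anticipated by Theorem~\ref{th:approx}.
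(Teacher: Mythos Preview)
Your proof is correct and follows essentially the same route as the paper's: both parametrize $X=e^{Z}$ with $Z\sim\mathcal{N}(\mu,\sigma^2)$, compute $\log\E X=\mu+\sigma^2/2$, $\E\log X=\mu$, and $\log\E(Y/X)=\sigma^2$ via independence and the log-normal mean formula, then substitute. The only cosmetic difference is that the paper obtains $\E(1/X)$ by noting $1/X\sim\mathcal{LN}(-\mu,\sigma)$, whereas you invoke the normal MGF at $t=-1$; these are the same computation.
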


\begin{proof}
Let $X\sim \mathcal{LN}(m,\sigma)$, which means that $\log X\sim \mathcal{N}(m,\sigma^2)$. Then $\E X=\exp(m+\sigma^2/2)$ and, consequently,
\begin{equation}\label{eq:th:lognormest:1}
\begin{array}{l @{\;} l @{\;} l}
\log \E X & = & m+\frac{\sigma^2}{2}, \; \E \log X=m.
\end{array}
\end{equation}
Moreover, $1/X\sim \mathcal{LN}(-m,\sigma)$ and hence
\begin{equation}\label{eq:th:lognormest:2}
\begin{array}{l @{\;} l @{\;} l}
\log \E\frac{Y}{X} &= &\log \E Y+\log  \E \frac{1}{X}
 = m+\frac{\sigma^2}{2}-m+\frac{\sigma^2}{2}=\sigma^2.
\end{array}
\end{equation}
By applying \eqref{eq:th:lognormest:1} and \eqref{eq:th:lognormest:2} in \eqref{eq:lognormest}, we obtain the conclusion.
\end{proof}

\begin{figure}[th!]
    \centering
    \begin{subfigure}{\columnwidth}
        \includegraphics[width=\linewidth]{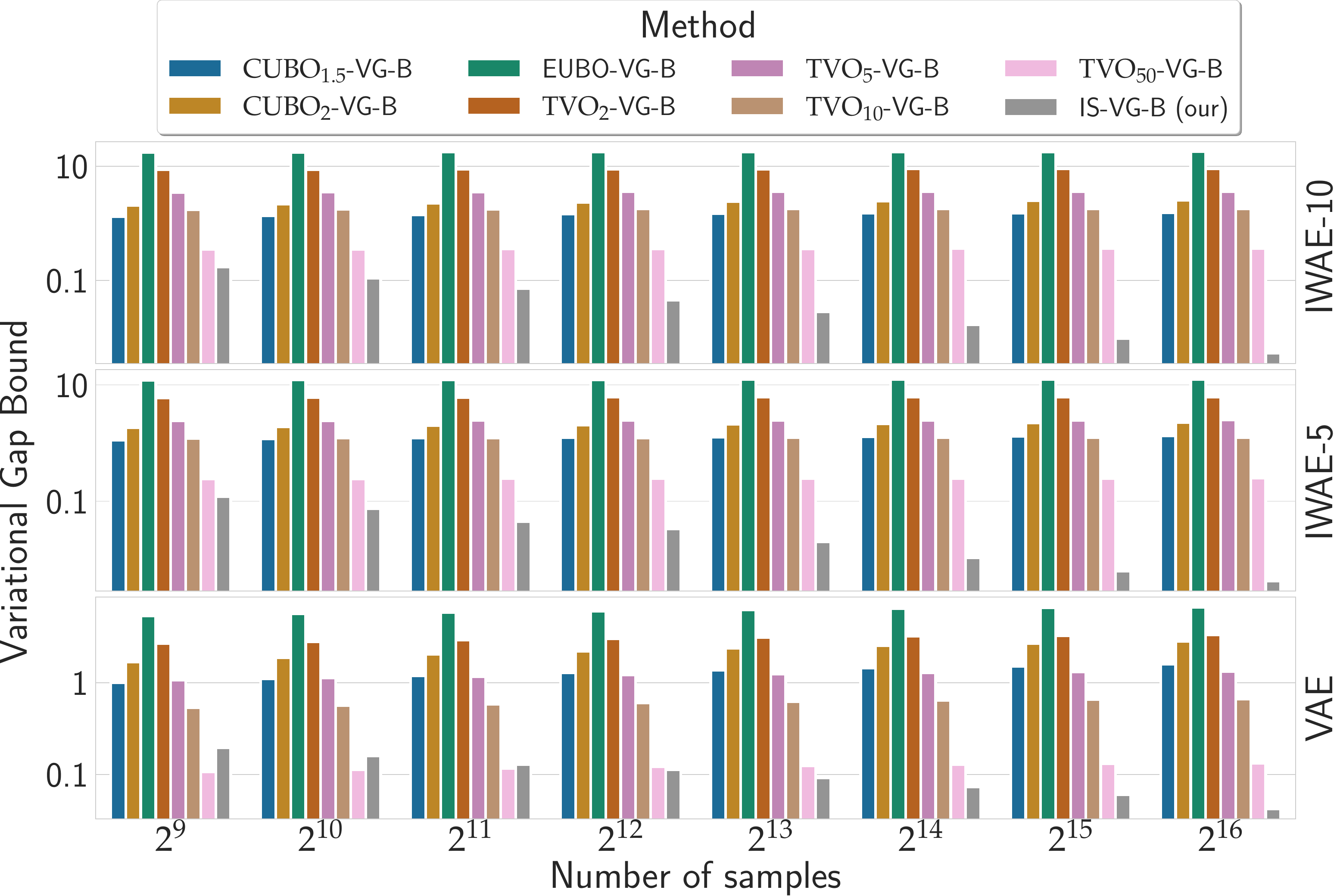}
        \caption{MNIST}
    \end{subfigure}
     \begin{subfigure}{\columnwidth}
        \includegraphics[width=\linewidth]{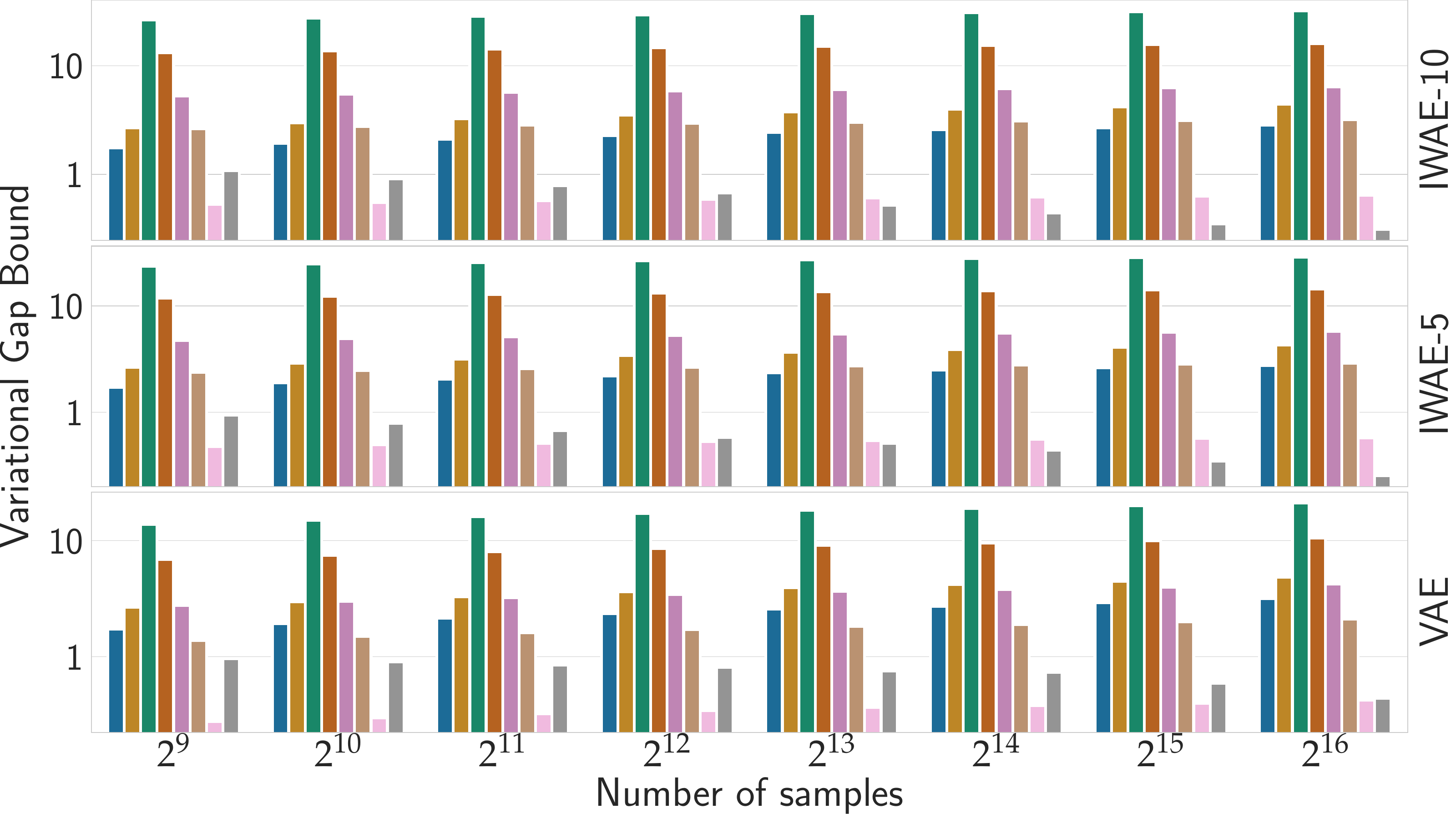}
        \caption{SVHN}
    \end{subfigure}
     \begin{subfigure}{\columnwidth}
         \includegraphics[width=\linewidth]{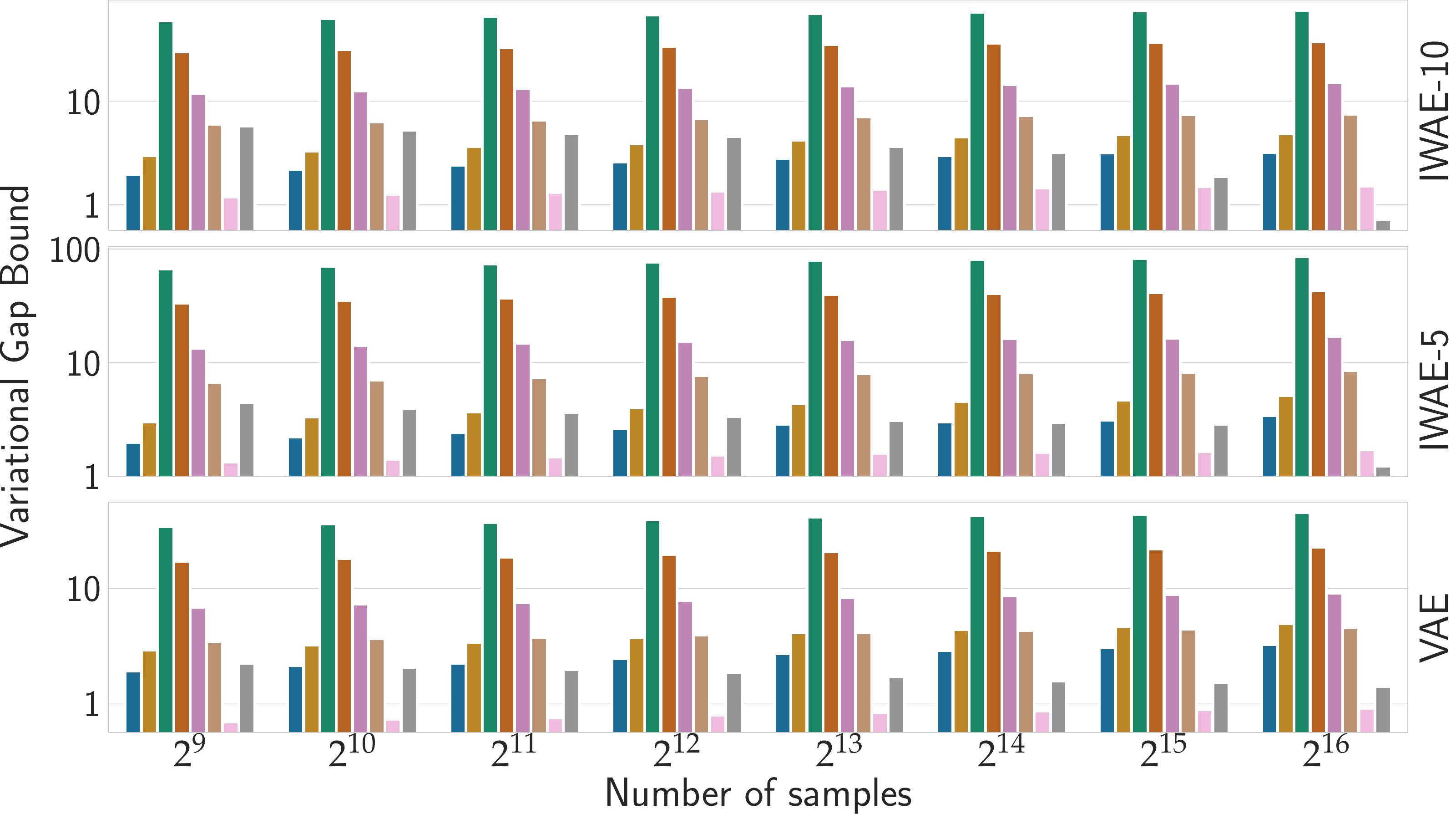}
         \caption{CelebA}
    \end{subfigure}
    \caption{\mm{Estimated size} of various variational gap bounds for the evidence of data (lower is better), calculated for VAE, IWAE-5, and IWAE-10 models, previously trained on MNIST, SVHN, and CelebA datasets, vs. \mm{the number} of latent samples.  All computations were averaged over 3 collections of samples, and over the test dataset.}
    \label{fig.compare_gap}
\end{figure}

\section{EXPERIMENTS}\label{sec:exp}

In this section, we consider variational generative models (like VAE). Let us first establish some standard notation. By $p(x)$ we denote the distribution \mm{induced by the model} on the input space $\mathcal{X}=\R^N$. By $p(z)$ we denote the prior distribution on the latent space $\mathcal{Z}$, while $q(z|x)$ denotes the variational encoder and $p(x|z)$ denotes the variational decoder.

Now, given a point $x \in \R^N$, its model log-likelihood (evidence) is \mm{expressed as follows:}
\begin{equation}\label{eq:exp:1}
\begin{array}{l @{\;} l @{\;} l}
\log p(x)
& = & \log \E_{z \sim q(\cdot |x)} \frac{p(x|z)p(z)}{q(z|x)}.
\end{array}
\end{equation}
To simplify the notation, we put
\begin{equation}\label{eq:exp:2}
\begin{array}{l @{\;} l @{\;} l}
R(x,z) & = & \frac{p(x|z)p(z)}{q(z|x)}.
\end{array}
\end{equation}
In the classical \mm{VAE,} we maximize 
\begin{equation}\label{eq:exp:3}
\begin{array}{l @{\;} l @{\;} l}
\text{ELBO} & = & \E_{z \sim q(\cdot |x)} \log R(x,z),
\end{array}
\end{equation}
which is the lower bound for the log-likelihood.

The idea behind the IWAE model \citep{burda2015importance} is to obtain an (asymptotically optimal) approximation of \mm{the} evidence by maximizing
\begin{equation}\label{eq:exp:4}
\begin{array}{l @{\;} l @{\;} l}
\text{IW-ELBO}_k=\E_{z_i \sim q(\cdot |x)} \log  \frac{1}{k}\sum_{i=1}^k R(x,z_i),
\end{array}
\end{equation}
which is a closer (than ELBO) lower bound for the log-likelihood. 

To obtain upper bounds for the evidence, it is enough to apply~ \Cref{th:euboimpc,cor:burdalogimproved} for $X=\frac{1}{k}\sum_{i=1}^k X_i$, where $X_i=R(x,z_i)$ and $Y_i=R(x,\tilde{z}_i)$, and all $z_i$ and $\tilde{z}_i$ are independently sampled from $q(\cdot|x)$. Then, we conclude that the size of the respective variational gap is bounded from above by the following value, which we will call {\em importance sampling variational gap bound (IS-VG-B)}: 
\begin{equation}\label{eq:isvgb}
\begin{array}{l @{\;} l @{\;} l}
\textrm{IS-VG-B} &= & C-1+\exp(-C)  \E_{z_i, \tilde z_i \sim q(\cdot |x)}  \frac{  \sum_{i=1}^k R(x,\tilde z_i)}{ \sum_{i=1}^k R(x,z_i)},
\end{array}
\end{equation}
where $C$ may be chosen \mm{arbitrarily}, with the optimum equal to
\begin{equation}\label{eq:cxopt}
\begin{array}{l @{\;} l @{\;} l}
C^{\,\text{opt}} &= &\log \E_{z_i, \tilde z_i \sim q(\cdot |x)}  \frac{  \sum_{i=1}^k R(x,\tilde z_i)}{ \sum_{i=1}^k R(x,z_i)}.
\end{array}
\end{equation}

Starting with a case study for simple
synthetic one-dimensional data generated from the Laplace
distribution, in the following few paragraphs we provide and discuss the results of the experiments, in which we compare our approach to those presented in \cite{dieng2017variational,ji2019stochastic,masrani2019thermodynamic}.
\mm{Mainly,} we apply \mm{all} considered estimation techniques for the variational gap to selected Gaussian autoencoders (i.e., classical VAE and two different IWAE models, on MNIST, SVHN, and CelebA datasets), previously learned using their own objectives and VAE experimental setup.

\paragraph{Case Study on Synthetic Data} \mm{Suppose that our data are drawn from a known distribution $p(x)$. Then obviously, the true evidence of data\footnote{In this paragraph, unlike before, the notion ``evidence'' refers to the expected value of the log-likelihood of data. In practice, this corresponds to taking the average over the whole dataset, which we use anyway in our experiments. Moreover, to avoid possible misunderstanding, we draw readers' attention to the fact that in our paper, the evidence of data is based on the model distribution (see \eqref{eq:evidence} and \eqref{eq:exp:1}). This paragraph is the only place where we also mention the true evidence.} is expressed as
$\int p(x)\log p(x) dx$.
Suppose that by training the VAE model, we construct the approximation of $p(x)$ in the class of distributions $p_\theta(x)$, where $\theta$ denotes the weights of the neural networks.
Then the (model) evidence is given as $\int p(x)\log p_{\theta}(x) dx$. 
Consequently, 
\begin{equation}\label{eq:ll}
\begin{array}{@{}l @{\;} l @{\;} l}
\text{true evidence} &=\text{evidence} + D_{\mathrm{KL}}(p(x)\|p_\theta(x)),
\end{array}
\end{equation}
where $D_{\mathrm{KL}}$ denotes the Kullback-Leibler divergence.
Obviously, if (which is a common case) $p(x)$ does not belong to the family of distributions $(p_\theta(x))_{\theta \in \Theta}$ (more precisely, $D_{KL}(p(x)\|p_\theta(x))>0$), then the evidence is less than the true-evidence.
Since we used the VAE model, we also have ELBO which is less than the evidence. Finally, we obtain upper and lower bounds for the evidence of data, which become tighter with the \mm{increasing number} of samples from the latent distribution $q_\theta(\cdot|x)$ (here $x$ represents any data point drawn from $p(x)$).

\begin{figure}[!ht]
    \centering
    \includegraphics[width=\columnwidth]{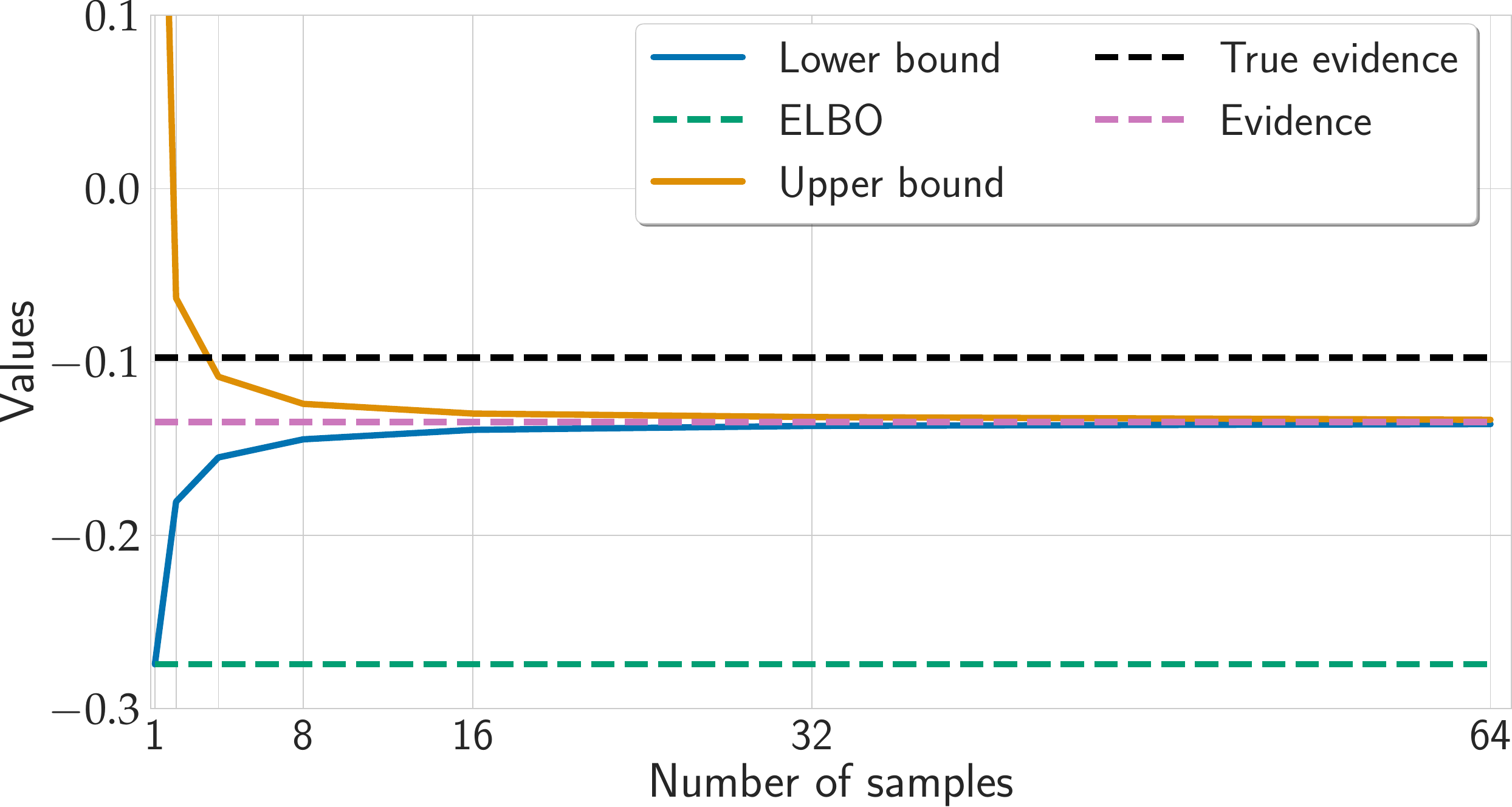}
    \caption{Behavior of lower and the upper bounds for the evidence vs.  \mm{the number} of samples from the latent.}
    \label{fig.laplace}
\end{figure}

\begin{figure}[th!]
    \centering
    \includegraphics[width=\linewidth]{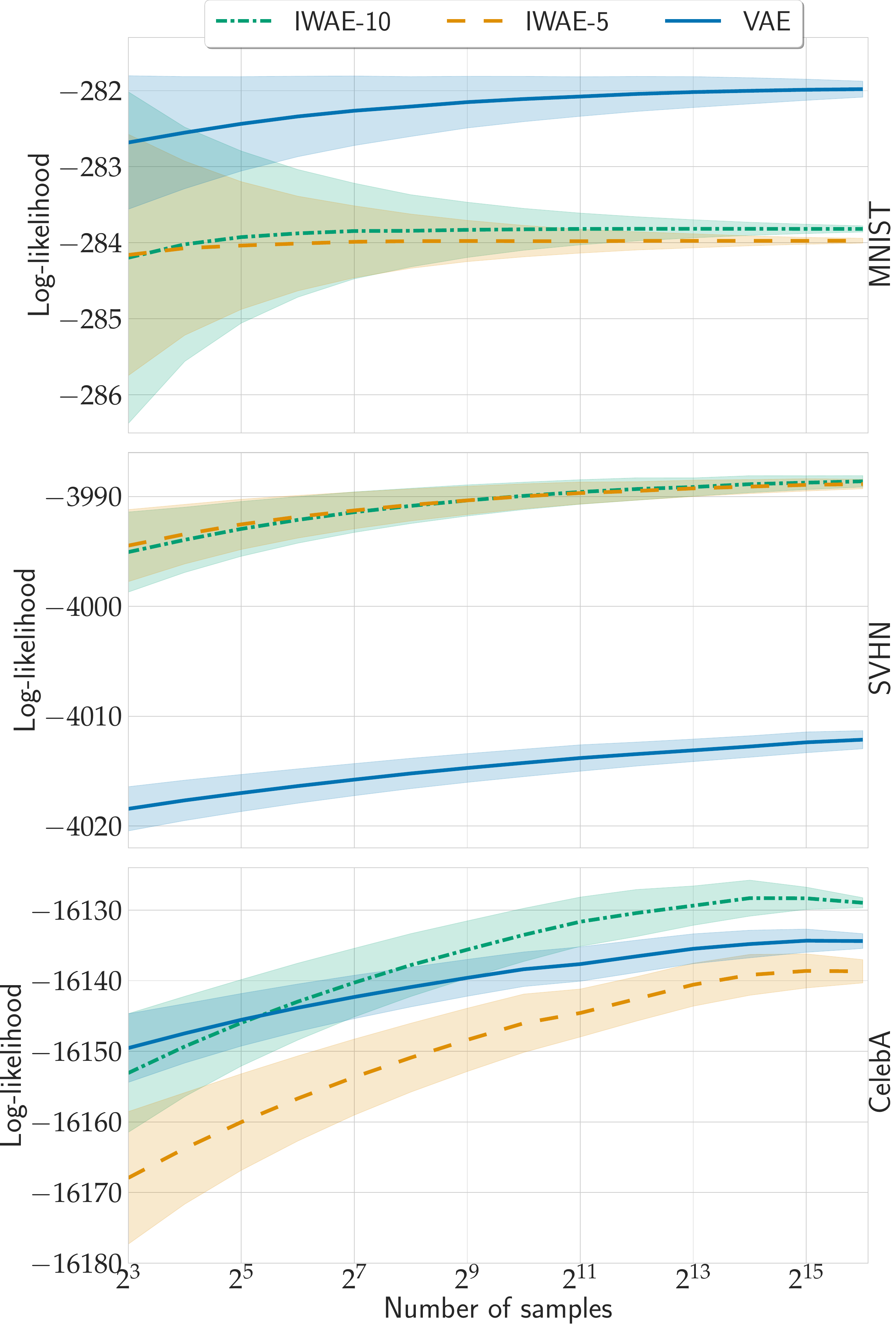}
    \caption{Behavior of lower and upper bounds for the evidence of data, calculated by our method for VAE, IWAE-5, and IWAE-10 models, previously trained on MNIST, SVHN, and CelebA datasets, vs. \mm{the number} of latent samples.  All computations were averaged over 3 collections of samples, and over the test dataset.
    }
    \label{fig.compare_models}
\end{figure}


We illustrate the above reasoning in the case of simple synthetic one-dimensional data generated from the Laplace distribution $\mathrm{Laplace}(0, 0.2)$. We trained the VAE model (for architecture details, see \Cref{app_app.add_results}) with one-dimensional latent space. The experimental results (see \Cref{fig.laplace}) are consistent with the \mm{above-mentioned} theoretical discussion.
The true evidence of the data coming from the Laplace distribution equals -0.097. On the other hand, ELBO (calculated as a value of the cost function of VAE) is a strong lower bound for the (model) evidence which, in turn, is less that the true evidence. Moreover, the IW-ELBO lower bound and our upper bound converge to the evidence of data, thus providing its tight estimate in the interval $[-0.137,-0.132]$,
with the ends calculated using $64$ latent samples.}

\paragraph{Experiments for VAE and IWAE Models} 
In our further experiments, we validate our importance sampling variational gap bound (given by \eqref{eq:isvgb} and \eqref{eq:cxopt}) \mm{against} the other state-of-the-art bounds, denoted by CUBO-VG-B, EUBO-VG-B, and TVO-VG-B, depending on the evidence upper bound used. 
All comparisons are made for VAE, IWAE-5, and IWAE-10 models, for which the respective bounds \mm{are calculated using (in total)} the same number of latent samples $z_i$. Each model was previously trained on three classical datasets, i.e., MNIST~\citep{lecun1998gradient}, SVHN~\citep{netzer2011reading}, and CelebA~\citep{liu2015deep}, using its own objective and VAE experimental setup (see \mm{\Cref{app_app.add_results}} for the details).
The code for all experiments is available on the GitHub repository \url{https://github.com/gmum/Bounding_Evidence_Estimating_LL}.

The obtained results are presented in \Cref{tab.gap,fig.compare_models,fig.compare_gap}. Namely, from \Cref{tab.gap} we learn that the proposed method for bounding the evidence of data is superior in comparison to all competitors, excluding $\text{TVO}_{50}$-VG-B \mm{for VAE} trained on \mm{SVHN and CelebA datasets}, as far as we make calculations using the largest considered number of latent samples (i.e., \mm{$2^{16}$}). \mm{However, estimating bounds for such sets of data is much more demanding because of complicated encoded data distributions. We believe increasing the sample size up to $2^{17}$ or $2^{18}$ (although quite time-consuming) would further weaken the impact of outliers and allow our method to definitely win.
Additionally,} a closer inspection of \Cref{fig.compare_gap} shows that our approach is the only one guaranteeing \mm{to decrease} a computed variational gap bound with \mm{the increasing} number of samples, which agrees with the conclusion of \Cref{cor:burdalogimproved}. This is due to applying the importance sampling technique, which (to our best knowledge) is not the case for the other methods \mm{(note that for them, the biased estimation may result in bounds growing with the sample size).}

\mm{More complete results of our approach, including the dependence of the obtained lower and upper bounds on the number of latent samples used in the computations, are presented in} \Cref{fig.compare_models}. In practice, they allow us (for any given dataset) to compare the effects of training between all considered models. Note that even though the IWAE models are learned using more latent samples, they \mm{do not} always deliver better results, i.e., greater values of the log-likelihood of data \citep[see also][]{rainforth2018tighter}. For example,
VAE trained on the MNIST dataset \mm{delivers} the best (the greatest) log-likelihood estimates\footnote{\mm{We would like to emphasize that, although our method is also able to evaluate the effects of even suboptimal training, we work on properly learned models. For example, the FID score for our Gaussian VAE equals 28.73 and is better than, e.g., 40.47 provided in \citep{JMLR:v21:19-560}, where a comparable experimental setup was used.
}}. 

\begin{figure}[th!]
    \centering
    \includegraphics[width=\linewidth]{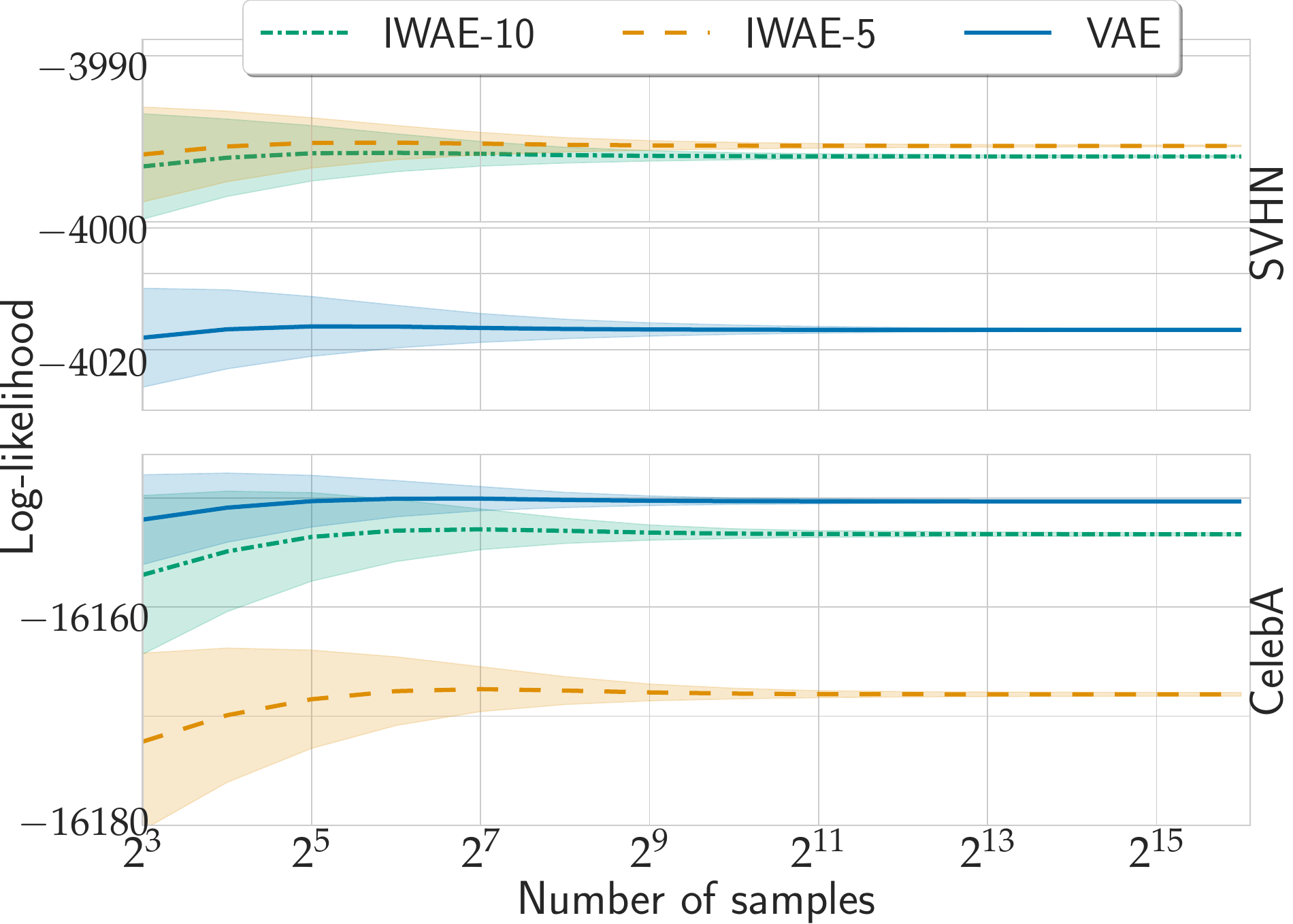}
    \caption{Behavior of lower and upper bounds for the evidence of data, calculated without 1\% outliers by our method for VAE, IWAE-5, and IWAE-10 models, previously trained on SVHN and CelebA datasets, vs. the number of latent samples.  All computations were averaged over \mm{3 collections of samples}, and over the test dataset.}
    \label{fig.gap_without_outliers}
\end{figure}

\mm{To prevent possible readers' concerns, we would like to explain that the gaps calculated on SVHN and CelebA datasets for different numbers of latent samples do not overlap because we use estimators (and not strict values, provided in \eqref{eq:isvgb} and \eqref{eq:cxopt}) suffering from the presence of outliers. To confirm this, we removed outliers and reran all computations. The obtained results are presented in \Cref{fig.gap_without_outliers}. Note that  
we observe no such phenomenon anymore, which supports our assertion.}

\begin{figure}[!ht]
    \centering
\includegraphics[width=0.96\linewidth]{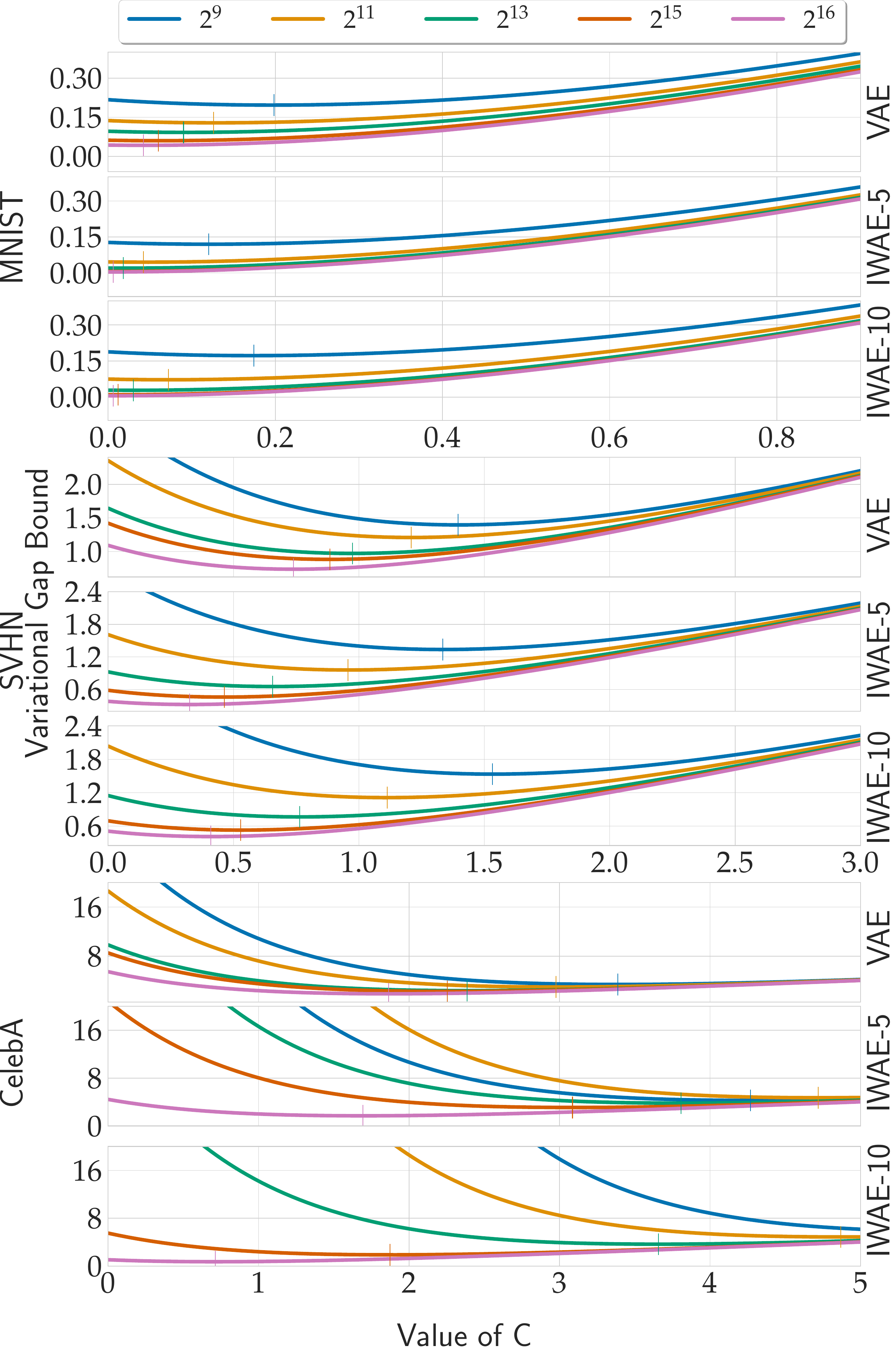}
    \caption{\mm{Estimated size} of the proposed variational gap bound, calculated using \eqref{eq:isvgb} for VAE, IWAE-5, and IWAE-10 models, previously trained on MNIST, SVHN, and CelebA datasets, vs. constant $C$. In each case, the optimal value $C^{\mathrm{opt}}$ is marked with a short vertical line. All computations were averaged over the test dataset.}
    \label{fig.C_gap}
\end{figure}

\mm{\paragraph{Dependence of Variational Gap Bound on $C$} We conducted additional experiments to explore the dependence of the proposed variational gap bound on the choice of constant $C$ in \eqref{eq:isvgb}. In \Cref{fig.C_gap} we present the results obtained for VAE, IWAE-5, and IWAE-10 models, previously trained on MNIST, SVHN, and CelebA datasets. Note that in each case, $C^{\mathrm{opt}}$ (see \eqref{eq:cxopt}) is a value for which IS-VG-B reaches a minimum.}

More details on the experimental results can be found in \mm{\Cref{app_app.add_results}}.

\section{CONCLUSION}

\mm{In this paper, we proposed novel upper bounds for the variational gap (see \eqref{eq:th:1} for a basic version and \eqref{eq:pr:1:2} for an improved version), and the use of the importance sampling technique to tighten them, which allowed us to calculate a precise estimation of $f(\E X)$ for any concave function $f$.
We focused particular attention on the most important case when $f=\log$ (see \eqref{eq:cor:1} for a basic version as well as \eqref{eq:euboimp} and \eqref{eq:th:euboimpc:1} for improved versions),} which led us to a practical method for validating the effects of training in generative models involving lower bound optimization. We conducted experiments that proved \mm{the superiority} of our approach in comparison to the other state-of-the-art techniques.

\paragraph{Limitation} \mm{In this contribution we did not consider possible  applications of our approach outside VAEs, which might be a natural direction for future work. Moreover,
 our} experience shows that the introduced novel bounds have rather limited utility in the training process.
\mm{Finally, calculating bounds in our experiments is based on estimators rather than strict values, which makes it sensitive to the presence of outliers, and results in nonrigorous bounds (although we note the discussed convergence results). However, we would like to emphasize that our proposed variational gap bounds are strict bounds as far as we limit ourselves to the theoretical approach (where we deal with random variables and their means).}

\paragraph{Societal Impact} There are no foreseen ethical or societal consequences for the research presented herein.

\subsubsection*{Acknowledgements}

This research was partially funded by the National Science Centre, Poland, grants no.  2020/39/D/ST6/01332 (work by \L{}ukasz Struski), 2021/43/B/ST6/01456 (work by Przemysław Spurek), and 2021/41/B/ST6/01370 (work by Jacek Tabor). Some experiments were performed on servers purchased with funds from the flagship project entitled ``Artificial Intelligence Computing Center Core Facility'' from the DigiWorld Priority Research Area within the Excellence Initiative -- Research University program at Jagiellonian University in Kraków.
Marcin Mazur's participation in the Conference was supported by the Faculty of Mathematics and Computer Science of the Jagiellonian University from the funds of the Excellence Initiative at the Jagiellonian University.

\bibliography{NIPS/ref}


\appendix
\onecolumn

\section{MISSING PROOFS}\label{app:proofs}

In this section, we provide proofs omitted from the main article.

\textbf{Theorem 1}. \label{app_th:1}
Let $f$ be a smooth concave function. Then
\begin{equation}\label{app_eq:th:1}
\begin{array}{l @{\;} l @{\;} l}
f(\E X) & \leq & \E (f(X)+(Y-X)f'(X)),
\end{array}
\end{equation}
where $X$ and $Y$ are two independent random variables with the same distribution.

\begin{proof}
Let $(\Omega,\mu)$ be a probabilistic space and $X,Y\colon \Omega \to \R$ be independent random variables with the same distribution. We use notation
\begin{equation}\label{eq:m=EX}
\begin{array}{l @{\;} l @{\;} l}
m & = &\E X=\int_{\Omega} X d\mu.
\end{array}
\end{equation}
Clearly $\E X=\E Y$.
Observe that applying Taylor's expansion and concavity of $f$ (which means that \mm{$f''\leq 0$}), for every $\omega \in \Omega$ we have
\begin{equation}\label{app_eq:taylor}
\begin{array}{l @{\;} l @{\;} l}
f(m) & \leq & f(X(\omega))+(m-X(\omega)) f'(X(\omega)).
\end{array}
\end{equation}
Integrating the above formula over all $\omega \in \Omega$ and making use \mm{of the} fact that $X$ and $Y$ are independent random variables with the same distribution, we obtain
\begin{equation}\label{app_eq:taylorn}
\begin{array}{l @{\;} l @{\;} l}
f(\E X) & = & \int_{\Omega} f(m) \, d\mu 
\leq \int_{\Omega} f(X(\omega))+(m-X(\omega)) f'(X(\omega)) \, d\mu\\[2mm]
& = & \E f(X)+\E Y  \E f'(X)-\E (X f'(X)) = \E(f(X)+(Y-X)f'(X)),
\end{array}
\end{equation}
which completes the proof.
\end{proof}

\textbf{Theorem 6}.
Assume that $f$ is a smooth function. \mm{Let $X$ and $Y$ be} independent random variables with the same distribution, which attain only values $\varepsilon$-close to $\E X$, where $\varepsilon>0$ is small. Then
\begin{equation}\label{app_eq:th:approx:1}
\begin{array}{@{}l @{\;} l @{\;} l}
f(\E X) & = & E f(X)+\frac{1}{2} \E((Y-X)f'(X))+o(\varepsilon^2).
\end{array}
\end{equation}

\begin{proof}
Let $(\Omega,\mu)$ be a probabilistic space and $X,Y\colon \Omega \to \R$ be random variables  satisfying the assumptions of the theorem. Put $m=\E X=\E Y$.
By applying Taylor's expansion for $f$ and $f'$, taking any $\omega\in \Omega$ we have
\begin{equation}\label{app_eq:taylor1}
f(m) = f(X(\omega))+(m-X(\omega)) f'(X(\omega)) + \frac{1}{2}(m-X(\omega))^2 f''(X(\omega))+o(\varepsilon^2)
\end{equation}
and
\begin{equation}\label{app_eq:taylor2}
\begin{array}{l @{\;} l @{\;} l}
f'(m) & = & f'(X(\omega))+(m-X(\omega)) f''(X(\omega)) +  o(\varepsilon).
\end{array}
\end{equation}
Hence, 
\begin{equation}\label{app_eq:taylor2n}
f(m) = f(X(\omega))+\frac{1}{2}(m-X(\omega))f'(m) +\frac{1}{2}(m-X(\omega))f'(X(\omega)) +  o(\varepsilon^2).
\end{equation}
Then integrating the above formula over all $\omega \in \Omega$ and making use \mm{of the} fact that $X$ and $Y$ are independent random variables with the same distribution, we obtain
\begin{equation}\label{app_eq:taylornn}
f(\E X) = \E f(X)+\frac{1}{2}(\E X \E f'(X)-\E (X f'(X))) + o(\varepsilon^2) 
= \E(f(X)+\frac{1}{2}(Y-X)f'(X)) + o(\varepsilon^2),
\end{equation}
which completes the proof.
\end{proof}

\section{OTHER BOUNDS}\label{app:bounds}
In this \mm{section,} we follow the notation established in \Cref{sec:exp} of the main paper.

\paragraph{$\chi$ Upper Bound (CUBO)}

The (general) $\chi$ upper bound for the evidence $\log p(x)$ was derived by \citet{dieng2017variational}, with the use of the $\chi$ divergence. It is given by the following formula:
\begin{equation}\label{app_cubo}
\text{CUBO}_n=\frac{1}{n}\log\E_{z\sim q(\cdot|x)}\left(\frac{p(x,z)}{q(z|x)}\right)^n=\frac{1}{n}\log\E_{z\sim q(\cdot|x)}R(x,z)^n,
\end{equation}
where $n\geq 1$. Let us note that CUBO is strictly related to the variational R\'{e}nyi \mm{bound provided} by \citet{NIPS2016_7750ca35}.

The authors of \citep{dieng2017variational} prove (see the sandwiching theorem therein) that $\text{CUBO}_n$ is a \mm{non-decreasing} function of $n\geq 1$ and
\begin{equation}\label{app_sandwich}
\text{IW-ELBO}_k \leq \log p(x)\leq \text{CUBO}_n
\end{equation}
for any $k\geq 1$. Hence, for comparison with our approach, as the respective bound for the variational gap (CUBO-VG-B) we take a difference between $\text{CUBO}_n$ and $\text{IW-ELBO}_k$, i.e.
\begin{equation}\label{app_cubo-vg-b}
\text{CUBO}_n\text{-VG-B}=\text{CUBO}_n-\text{IW-ELBO}_k.
\end{equation}

\paragraph{Thermodynamic Variational Objective (TVO)}

\citet{masrani2019thermodynamic} provide upper and lower bounds for the evidence \mm{$\log p(x)$,} obtained by applying \mm{the thermodynamic} integration technique. It is based on bounding (via left and right Riemann sum) a one-dimensional integral of \mm{the expected values of instantaneous} ELBO, calculated under latent distributions $\pi_\beta$ for $\beta\in [0,1]$, lying on a geometric path between $q(z|x)$ and $p(x,z)$ (hence $\beta_0=q(z|x)$ and $\beta_1=p(z|x)$). This leads to a lower and upper version of the thermodynamic variational objective (TVO), given by the following formulas:
\begin{equation}\label{app_tvol}
\text{TVO}^L_K=\frac{1}{K}\sum_{l=0}^{K-1}\E_{z\sim \pi_{\beta_l}} \log \frac{p(x,z)}{q(z|x)} = \frac{1}{K}\sum_{l=0}^{K-1}\E_{z\sim \pi_{\beta_l}} \log R(x,z),
\end{equation}
and
\begin{equation}\label{app_tvou}
\text{TVO}^U_K=\frac{1}{K}\sum_{l=1}^{K}\E_{z\sim \pi_{\beta_l}} \log \frac{p(x,z)}{q(z|x)}=\frac{1}{K}\sum_{l=1}^{K}\E_{z\sim\pi_{\beta_l}} \log R(x,z),
\end{equation}
where $\beta_l=l/K$. Let us note that $\text{TVO}^L_1=\text{ELBO}$ and $\text{TVO}^U_1$ \mm{coincides} with the evidence upper bound (EUBO), which was introduced by \citet{ji2019stochastic}.

It was proved in \citep{masrani2019thermodynamic} that for any $K\geq 1$ we have
\begin{equation}\label{app_tvobound}
\text{TVO}^L_K\leq \log p(x)\leq  \text{TVO}^U_K.
\end{equation}
 Hence, for our purpose, as the respective bound for the variational gap (TVO-VG-B) we take a difference between $\text{TVO}^U_K$ and $\text{TVO}^L_K$, i.e.:
\begin{equation}\label{app_tvo-vg-b}
\text{TVO}_K\text{-VG-B}=\text{TVO}^U_K-\text{TVO}^L_K.
\end{equation}
In particular,  \begin{equation}\label{app_eubo-vg-b}
\text{EUBO-VG-B}=\text{TVO}_1\text{-VG-B}=\text{TVO}^U_1-\text{TVO}^L_1=\text{EUBO}-\text{ELBO}.
\end{equation}

\section{EXPERIMENTAL DETAILS AND EXTENSIONS}\label{app_app.add_results}

\paragraph{Experimental Results for VAE and IWAE Models} 
We validated our importance sampling variational gap bound (IS-VG-B) \mm{against} the other state-of-the-art bounds, denoted by CUBO-VG-B, EUBO-VG-B, and TVO-VG-B, depending on the evidence upper bound used.
All computations were made for VAE and IWAE models, previously trained with the use of ELBO, $\text{IW-ELBO}_5$, and $\text{IW-ELBO}_{10}$ objectives. For the IWAE models, we used the same neural architectures as in VAE.

\Cref{app_tab.whole_gap}, which is an extension of \Cref{tab.gap} \mm{from} the main paper, presents the size of all considered variational gap bounds for the evidence of data, computed with the use of \mm{various numbers} of latent samples and averaged over the test dataset.

\begin{table}[!ht]\small
\centering
\caption{\mm{Estimated size} of various variational gap bounds for the evidence of data (lower is better), calculated for VAE, IWAE-5, and IWAE-10 models, previously trained on MNIST, SVHN, and CelebA datasets. All computations were averaged over 3 collections of $2^3$--$2^{16}$ latent samples, and over the test dataset.}
\label{app_tab.whole_gap}
\begin{tabular}{@{}c@{}c@{\,}c@{\,}c@{\,}c@{\,}c@{\,}c@{\,}c@{\,}c@{\;}c@{\;\;}c@{\,}c@{\,}c@{\,}c@{\,}c@{\,}c@{\,}c@{\;}c@{\;\;}c@{\,}c@{\,}c@{\,}c@{\,}c@{\,}c@{\,}c@{\;}c@{}}
\toprule
& & \multicolumn{24}{c}{\bfseries VARIATIONAL GAP BOUND} \\
\cmidrule{3-26}
 \rotatebox{90}{\bfseries Model} & \rotatebox{90}{\bfseries \#Samples} & \rotatebox{90}{\bfseries IS (OUR)} & \rotatebox{90}{\bfseries $\text{CUBO}_{1.5}$} & \rotatebox{90}{\bfseries $\text{CUBO}_2$} & \rotatebox{90}{\bfseries EUBO} & \rotatebox{90}{\bfseries $\text{TVO}_2$} & \rotatebox{90}{\bfseries $\text{TVO}_5$} & \rotatebox{90}{\bfseries $\text{TVO}_{10}$} & \rotatebox{90}{\bfseries $\text{TVO}_{50}$}
 & \rotatebox{90}{\bfseries IS (OUR)} & \rotatebox{90}{\bfseries $\text{CUBO}_{1.5}$} & \rotatebox{90}{\bfseries $\text{CUBO}_2$} & \rotatebox{90}{\bfseries EUBO} & \rotatebox{90}{\bfseries $\text{TVO}_2$} & \rotatebox{90}{\bfseries $\text{TVO}_5$} & \rotatebox{90}{\bfseries $\text{TVO}_{10}$} & \rotatebox{90}{\bfseries $\text{TVO}_{50}$}
 & \rotatebox{90}{\bfseries IS (OUR)} & \rotatebox{90}{\bfseries $\text{CUBO}_{1.5}$} & \rotatebox{90}{\bfseries $\text{CUBO}_2$} & \rotatebox{90}{\bfseries EUBO} & \rotatebox{90}{\bfseries $\text{TVO}_2$} & \rotatebox{90}{\bfseries $\text{TVO}_5$} & \rotatebox{90}{\bfseries $\text{TVO}_{10}$} & \rotatebox{90}{\bfseries $\text{TVO}_{50}$}
 \\
\midrule
& & \multicolumn{8}{c}{\bfseries MNIST} & \multicolumn{8}{c}{\bfseries SVHN} & \multicolumn{8}{c}{\bfseries CelebA} \\
\cmidrule(r){3-10} \cmidrule(r){11-18} \cmidrule(r){19-26}
\multirow[c]{13}{*}{\rotatebox{90}{VAE}} 
& $2^{3}$ & 0.76 & 0.37 & 0.59 & 2.83 & 1.42 & 0.57 & 0.28 & 0.06  & 1.61 & 0.53 & 0.82 & 5.67 & 2.83 & 1.13 & 0.57 & 0.11 & 4.76 & 0.63 & 0.95 & 16.66 & 8.33 & 3.33 & 1.67 & 0.33 \\
& $2^{4}$ & 0.58 & 0.48 & 0.78 & 3.43 & 1.72 & 0.69 & 0.34 & 0.07   & 1.45 & 0.72 & 1.12 & 7.23 & 3.62 & 1.45 & 0.72 & 0.14  & 4.00 & 0.84 & 1.27 & 20.43 & 10.21 & 4.08 & 2.04 & 0.41 \\
& $2^{5}$ & 0.45 & 0.59 & 0.97 & 3.94 & 1.97 & 0.79 & 0.39 & 0.08   & 1.31 & 0.92 & 1.42 & 8.70 & 4.35 & 1.74 & 0.87 & 0.17  & 3.46 & 1.05 & 1.59 & 23.80 & 11.90 & 4.76 & 2.38 & 0.48 \\
& $2^{6}$ & 0.34 & 0.70 & 1.15 & 4.37 & 2.19 & 0.87 & 0.44 & 0.09   & 1.20 & 1.12 & 1.72 & 10.07 & 5.03 & 2.01 & 1.01 & 0.20 & 3.06 & 1.26 & 1.92 & 26.78 & 13.39 & 5.36 & 2.68 & 0.54 \\
& $2^{7}$ & 0.29 & 0.81 & 1.34 & 4.75 & 2.37 & 0.95 & 0.47 & 0.09   & 1.13 & 1.32 & 2.03 & 11.35 & 5.68 & 2.27 & 1.14 & 0.23 & 2.78 & 1.48 & 2.24 & 29.37 & 14.69 & 5.87 & 2.94 & 0.59 \\
& $2^{8}$ & 0.23 & 0.91 & 1.51 & 5.08 & 2.54 & 1.02 & 0.51 & 0.10   & 1.03 & 1.52 & 2.35 & 12.61 & 6.31 & 2.52 & 1.26 & 0.25 & 2.50 & 1.69 & 2.56 & 31.83 & 15.91 & 6.37 & 3.18 & 0.64 \\
& $2^{9}$ & 0.20 & 1.01 & 1.70 & 5.38 & 2.69 & 1.08 & 0.54 & 0.11   & 0.96 & 1.73 & 2.66 & 13.81 & 6.90 & 2.76 & 1.38 & 0.28 & 2.25 & 1.90 & 2.89 & 34.11 & 17.05 & 6.82 & 3.41 & 0.68 \\
& $2^{10}$ & 0.17 & 1.10 & 1.87 & 5.63 & 2.82 & 1.13 & 0.56 & 0.11  & 0.90 & 1.94 & 2.97 & 14.95 & 7.48 & 2.99 & 1.49 & 0.30 & 2.13 & 2.12 & 3.21 & 36.17 & 18.08 & 7.23 & 3.62 & 0.72 \\
& $2^{11}$ & 0.13 & 1.20 & 2.04 & 5.87 & 2.94 & 1.17 & 0.59 & 0.12  & 0.85 & 2.15 & 3.30 & 16.09 & 8.05 & 3.22 & 1.61 & 0.32 & 2.05 & 2.23 & 3.37 & 37.29 & 18.64 & 7.46 & 3.73 & 0.74 \\
& $2^{12}$ & 0.10 & 1.28 & 2.21 & 6.08 & 3.04 & 1.22 & 0.61 & 0.12  & 0.81 & 2.35 & 3.62 & 17.18 & 8.59 & 3.44 & 1.72 & 0.34 & 1.94 & 2.44 & 3.70 & 39.20 & 19.60 & 7.84 & 3.92 & 0.79 \\
& $2^{13}$ & 0.09 & 1.37 & 2.38 & 6.27 & 3.13 & 1.25 & 0.63 & 0.13  & 0.76 & 2.57 & 3.94 & 18.25 & 9.12 & 3.65 & 1.82 & 0.36 & 1.77 & 2.69 & 4.08 & 41.28 & 20.64 & 8.26 & 4.13 & 0.83 \\
& $2^{14}$ & 0.08 & 1.45 & 2.54 & 6.42 & 3.21 & 1.28 & 0.64 & 0.13  & 0.73 & 2.73 & 4.18 & 19.02 & 9.51 & 3.80 & 1.90 & 0.38 & 1.71 & 2.85 & 4.32 & 42.49 & 21.25 & 8.50 & 4.25 & 0.85 \\
& $2^{15}$ & 0.06 & 1.52 & 2.68 & 6.56 & 3.28 & 1.31 & 0.66 & 0.13  & 0.59 & 2.92 & 4.48 & 19.94 & 9.97 & 3.99 & 1.99 & 0.40 & 1.25 & 3.02 & 4.59 & 43.75 & 21.88 & 8.75 & 4.38 & 0.88 \\
& $2^{16}$ & 0.04 & 1.59 & 2.83 & 6.68 & 3.34 & 1.34 & 0.67 & 0.13  & 0.44 & 3.16 & 4.84 & 21.08 & 10.54 & 4.22 & 2.11 & 0.42 & 1.12 & 3.25 & 4.92 & 45.45 & 22.73 & 9.09 & 4.55 & 0.91 \\
\midrule
\multirow[c]{13}{*}{\rotatebox{90}{IWAE-5}} 
& $2^{3}$ & 1.76 & 0.54 & 0.83 & 8.73 & 4.36 & 1.75 & 0.87 & 0.17 & 3.06 & 0.60 & 0.92 & 13.30 & 6.65 & 2.66 & 1.33 & 0.27          & 9.54 & 0.66 & 1.00 & 35.33 & 17.67 & 7.07 & 3.53 & 0.71  \\         
& $2^{4}$ & 1.11 & 0.68 & 1.06 & 9.84 & 4.92 & 1.97 & 0.98 & 0.20    & 2.40 & 0.80 & 1.22 & 15.85 & 7.93 & 3.17 & 1.59 & 0.32       & 8.04 & 0.88 & 1.33 & 42.57 & 21.29 & 8.51 & 4.26 & 0.85  \\         
& $2^{5}$ & 0.69 & 0.81 & 1.26 & 10.62 & 5.31 & 2.12 & 1.06 & 0.21   & 1.95 & 0.99 & 1.51 & 17.94 & 8.97 & 3.59 & 1.79 & 0.36       & 6.85 & 1.11 & 1.67 & 48.78 & 24.39 & 9.76 & 4.88 & 0.98  \\         
& $2^{6}$ & 0.43 & 0.91 & 1.44 & 11.15 & 5.58 & 2.23 & 1.12 & 0.22   & 1.59 & 1.18 & 1.80 & 19.72 & 9.86 & 3.94 & 1.97 & 0.39       & 6.01 & 1.33 & 2.00 & 54.09 & 27.05 & 10.82 & 5.41 & 1.08 \\         
& $2^{7}$ & 0.27 & 1.00 & 1.59 & 11.53 & 5.76 & 2.31 & 1.15 & 0.23   & 1.31 & 1.36 & 2.08 & 21.23 & 10.62 & 4.25 & 2.12 & 0.42      & 5.37 & 1.55 & 2.33 & 58.81 & 29.41 & 11.76 & 5.88 & 1.17 \\         
& $2^{8}$ & 0.19 & 1.07 & 1.72 & 11.78 & 5.89 & 2.36 & 1.18 & 0.24   & 1.10 & 1.54 & 2.36 & 22.53 & 11.27 & 4.51 & 2.25 & 0.45      & 4.82 & 1.77 & 2.66 & 63.10 & 31.55 & 12.62 & 6.31 & 1.26 \\         
& $2^{9}$ & 0.12 & 1.13 & 1.82 & 11.96 & 5.98 & 2.39 & 1.20 & 0.24   & 0.94 & 1.71 & 2.63 & 23.68 & 11.84 & 4.74 & 2.37 & 0.47      & 4.41 & 1.99 & 3.00 & 66.91 & 33.46 & 13.38 & 6.69 & 1.34 \\         
& $2^{10}$ & 0.07 & 1.17 & 1.91 & 12.09 & 6.05 & 2.42 & 1.21 & 0.24  & 0.79 & 1.88 & 2.90 & 24.71 & 12.35 & 4.94 & 2.47 & 0.49      & 3.98 & 2.21 & 3.33 & 70.48 & 35.24 & 14.10 & 7.05 & 1.41 \\         
& $2^{11}$ & 0.04 & 1.21 & 1.99 & 12.18 & 6.09 & 2.44 & 1.22 & 0.24  & 0.67 & 2.05 & 3.16 & 25.61 & 12.81 & 5.12 & 2.56 & 0.51      & 3.60 & 2.43 & 3.67 & 73.79 & 36.90 & 14.76 & 7.38 & 1.48 \\         
& $2^{12}$ & 0.03 & 1.24 & 2.05 & 12.25 & 6.12 & 2.45 & 1.22 & 0.24  & 0.58 & 2.20 & 3.41 & 26.41 & 13.21 & 5.28 & 2.64 & 0.53      & 3.34 & 2.65 & 4.00 & 76.81 & 38.40 & 15.36 & 7.68 & 1.54 \\         
& $2^{13}$ & 0.02 & 1.27 & 2.11 & 12.30 & 6.15 & 2.46 & 1.23 & 0.25  & 0.51 & 2.35 & 3.65 & 27.14 & 13.57 & 5.43 & 2.71 & 0.54      & 3.09 & 2.87 & 4.33 & 79.60 & 39.80 & 15.92 & 7.96 & 1.59 \\         
& $2^{14}$ & 0.01 & 1.29 & 2.16 & 12.33 & 6.17 & 2.47 & 1.23 & 0.25  & 0.44 & 2.49 & 3.88 & 27.77 & 13.89 & 5.55 & 2.78 & 0.56      & 2.97 & 3.01 & 4.54 & 81.15 & 40.57 & 16.23 & 8.11 & 1.62 \\         
& $2^{15}$ & 0.01 & 1.31 & 2.20 & 12.36 & 6.18 & 2.47 & 1.24 & 0.25  & 0.35 & 2.62 & 4.08 & 28.30 & 14.15 & 5.66 & 2.83 & 0.57      & 2.88 & 3.12 & 4.70 & 82.31 & 41.16 & 16.46 & 8.23 & 1.65 \\         
& $2^{16}$ & 0.004 & 1.32 & 2.24 & 12.38 & 6.19 & 2.48 & 1.24 & 0.25 & 0.31 & 2.74 & 4.30 & 28.82 & 14.41 & 5.76 & 2.88 & 0.58      & 1.23 & 3.40 & 5.14 & 85.41 & 42.70 & 17.08 & 8.54 & 1.71 \\         
\midrule
\multirow[c]{13}{*}{\rotatebox{90}{IWAE-10}} 
& $2^{3}$ & 2.73 & 0.58 & 0.88 & 12.69 & 6.35 & 2.54 & 1.27 & 0.25 & 3.39 & 0.61 & 0.93 & 14.86 & 7.43 & 2.97 & 1.49 & 0.30 & 8.81 & 0.66 & 0.99 & 31.32 & 15.66 & 6.26 & 3.13 & 0.63 \\
& $2^{4}$ & 1.71 & 0.75 & 1.14 & 14.34 & 7.17 & 2.87 & 1.43 & 0.29   & 2.70 & 0.81 & 1.23 & 17.69 & 8.84 & 3.54 & 1.77 & 0.35  & 7.18 & 0.88 & 1.32 & 37.80 & 18.90 & 7.56 & 3.78 & 0.76 \\
& $2^{5}$ & 1.09 & 0.90 & 1.38 & 15.46 & 7.73 & 3.09 & 1.55 & 0.31   & 2.15 & 1.01 & 1.53 & 20.01 & 10.01 & 4.00 & 2.00 & 0.40 & 6.14 & 1.10 & 1.66 & 43.38 & 21.69 & 8.68 & 4.34 & 0.87 \\
& $2^{6}$ & 0.67 & 1.03 & 1.59 & 16.25 & 8.13 & 3.25 & 1.63 & 0.33   & 1.76 & 1.20 & 1.83 & 21.94 & 10.97 & 4.39 & 2.19 & 0.44 & 5.37 & 1.32 & 1.99 & 48.23 & 24.11 & 9.64 & 4.82 & 0.96 \\
& $2^{7}$ & 0.42 & 1.14 & 1.78 & 16.80 & 8.40 & 3.36 & 1.68 & 0.34   & 1.48 & 1.39 & 2.12 & 23.61 & 11.80 & 4.72 & 2.36 & 0.47 & 4.81 & 1.54 & 2.32 & 52.48 & 26.24 & 10.50 & 5.25 & 1.05 \\
& $2^{8}$ & 0.27 & 1.23 & 1.94 & 17.19 & 8.59 & 3.44 & 1.72 & 0.34   & 1.26 & 1.57 & 2.41 & 25.06 & 12.53 & 5.01 & 2.51 & 0.50 & 4.33 & 1.76 & 2.65 & 56.37 & 28.19 & 11.28 & 5.64 & 1.13 \\
& $2^{9}$ & 0.17 & 1.30 & 2.07 & 17.46 & 8.73 & 3.49 & 1.75 & 0.35   & 1.08 & 1.76 & 2.69 & 26.34 & 13.17 & 5.27 & 2.63 & 0.53 & 3.92 & 1.98 & 2.98 & 59.87 & 29.94 & 11.97 & 5.99 & 1.20 \\
& $2^{10}$ & 0.10 & 1.36 & 2.18 & 17.64 & 8.82 & 3.53 & 1.76 & 0.35  & 0.91 & 1.93 & 2.97 & 27.48 & 13.74 & 5.50 & 2.75 & 0.55 & 3.61 & 2.20 & 3.31 & 62.97 & 31.49 & 12.59 & 6.30 & 1.26 \\
& $2^{11}$ & 0.07 & 1.41 & 2.27 & 17.77 & 8.89 & 3.55 & 1.78 & 0.36  & 0.78 & 2.10 & 3.24 & 28.48 & 14.24 & 5.70 & 2.85 & 0.57 & 3.34 & 2.42 & 3.65 & 66.00 & 33.00 & 13.20 & 6.60 & 1.32 \\
& $2^{12}$ & 0.05 & 1.45 & 2.34 & 17.86 & 8.93 & 3.57 & 1.79 & 0.36  & 0.67 & 2.27 & 3.50 & 29.37 & 14.69 & 5.87 & 2.94 & 0.59 & 3.07 & 2.60 & 3.92 & 68.31 & 34.16 & 13.66 & 6.83 & 1.37 \\
& $2^{13}$ & 0.03 & 1.48 & 2.41 & 17.93 & 8.96 & 3.59 & 1.79 & 0.36  & 0.52 & 2.43 & 3.75 & 30.18 & 15.09 & 6.04 & 3.02 & 0.60 & 2.82 & 2.78 & 4.19 & 70.36 & 35.18 & 14.07 & 7.04 & 1.41 \\
& $2^{14}$ & 0.02 & 1.50 & 2.46 & 17.97 & 8.99 & 3.59 & 1.80 & 0.36  & 0.44 & 2.57 & 3.97 & 30.85 & 15.42 & 6.17 & 3.08 & 0.62 & 2.70 & 2.94 & 4.44 & 72.31 & 36.16 & 14.46 & 7.23 & 1.45 \\
& $2^{15}$ & 0.01 & 1.52 & 2.51 & 18.01 & 9.00 & 3.60 & 1.80 & 0.36  & 0.35 & 2.68 & 4.16 & 31.36 & 15.68 & 6.27 & 3.14 & 0.63 & 1.87 & 3.17 & 4.78 & 74.71 & 37.36 & 14.94 & 7.47 & 1.49 \\
& $2^{16}$ & 0.01 & 1.54 & 2.55 & 18.03 & 9.01 & 3.61 & 1.80 & 0.36     & 0.28 & 2.83 & 4.40 & 32.00 & 16.00 & 6.40 & 3.20 & 0.64  &  0.72 & 3.22 & 4.86 & 75.46 & 37.73 & 15.09 & 7.55 & 1.51 \\
\bottomrule
\end{tabular}
\end{table}

\paragraph{Algorithmic Details}

In each experiment, to calculate the required bounds, \mm{i.e.,} CUBO-VG-B, EUBO-VG-B, TVO-VG-B, and (our) IS-VG-B,  we used \eqref{app_cubo}, \eqref{app_cubo-vg-b}, \eqref{app_tvol}, \eqref{app_tvou}, \eqref{app_tvo-vg-b}, and \eqref{app_eubo-vg-b}, as well as \eqref{eq:exp:4}, \eqref{eq:isvgb}, and \eqref{eq:cxopt} from the main paper. It means that we had \mm{to} estimate a few expected values that appear \mm{in the mentioned formulas.} We made this by both \mm{direct application of respective (unbiased) sample mean estimators and a method proposed by \citet{masrani2019thermodynamic} (see (13) therein)} that allowed us to reuse samples from $q(\cdot|x)$ to estimate an expected value under any $\pi_{\beta_l}$. Namely, we used the following formulas:
\begin{equation}\label{app_samplemeanest1}
\E_{z\sim q(\cdot|x)}R(x,z)^n\approx \frac{1}{k}\sum_{i=1}^k R(x,z_i)^n \text{ for } n\geq 1,
\end{equation}
\begin{equation}\label{app_samplemeanest4}
\E_{z\sim \pi_{\beta_l}}\log R(x,z)\approx \sum_{i=1}^k \overline{w_i^{l}}\log R(x,z_i) \text{ for } l=0,\ldots, K,
\end{equation}
\begin{equation}\label{app_samplemeanest2}
\E_{z_i \sim q(\cdot |x)} \log  \frac{1}{k}\sum_{i=1}^k R(x,z_i)\approx \frac{1}{m}\sum_{j=1}^m\log  \frac{1}{k}\sum_{i=1}^k R(x,z^j_i),
\end{equation}
\begin{equation}\label{app_samplemeanest3}
\E_{z_i, \tilde z_i \sim q(\cdot |x)}  \frac{  \sum_{i=1}^k R(x,\tilde z_i)}{ \sum_{i=1}^k R(x,z_i)}\approx \frac{1}{m}\sum_{j=1}^m\frac{\sum_{i=1}^k R(x,\tilde z^j_i)}{ \sum_{i=1}^k R(x,z^j_i)},
\end{equation}
where $z_i$, $z_i^j$, and $\tilde z_i^j$ were sampled from $q(\cdot|x)$,
$w_i^l=R(x,z_i)^{\beta_l}$, and $\overline{w_i^l}=w_i^l/\sum_{j=1}^k w_j^l$. In each \mm{case,} we forced the use of the same number of latent samples ($z_i$ or $z_i^j$), and therefore computations of CUBO and TVO were averaged over $m$ repetitions. In our experiments we set $m=3$ and $k=2^3,\ldots, 2^{16}$.

\paragraph{Training and Architecture Details} All experiments were performed on a single Tesla v100 GPU. We used a convolutional VAE architecture (see below for more details) with weights optimized by the Adam optimizer and a learning rate of 0.0001. The networks were trained for 100 epochs with a batch size of 64 for \mm{the CelebA} dataset and 50 epochs for both \mm{MNIST and SVHN} datasets.
In all reported experiments, we used Euclidean latent spaces $\mathcal{Z}=\R^d$ for $d=8, 32, 128$, depending on the used dataset (respectively: MNIST, SVHN, and CelebA). 
We took standard Gaussian priors $p(z) \sim \mathcal{N}(0, I_d)$.
We used Gaussian encoders $q(z|x) \sim  \mathcal{N}(\mu_x, \Sigma_x)$, with a mean $\mu_x$ and a diagonal covariance matrix $\Sigma_x$, and Gaussian decoders $p(x|z) \sim  \mathcal{N}(q(z|x), \sigma^2 I)$ with $\sigma^2 = 0.3$.

We applied three different models, each for a different dataset. For the MNIST dataset, we used a network architecture that contained two parts: an encoder and a decoder. The encoder consisted mainly of a 2-layer fully-connection, and the decoder consisted of a 3-layer fully-connection. Between each layer, we used the ReLu activation function.

\begin{figure*}[t]
    \centering
    \includegraphics[width=.32\textwidth]{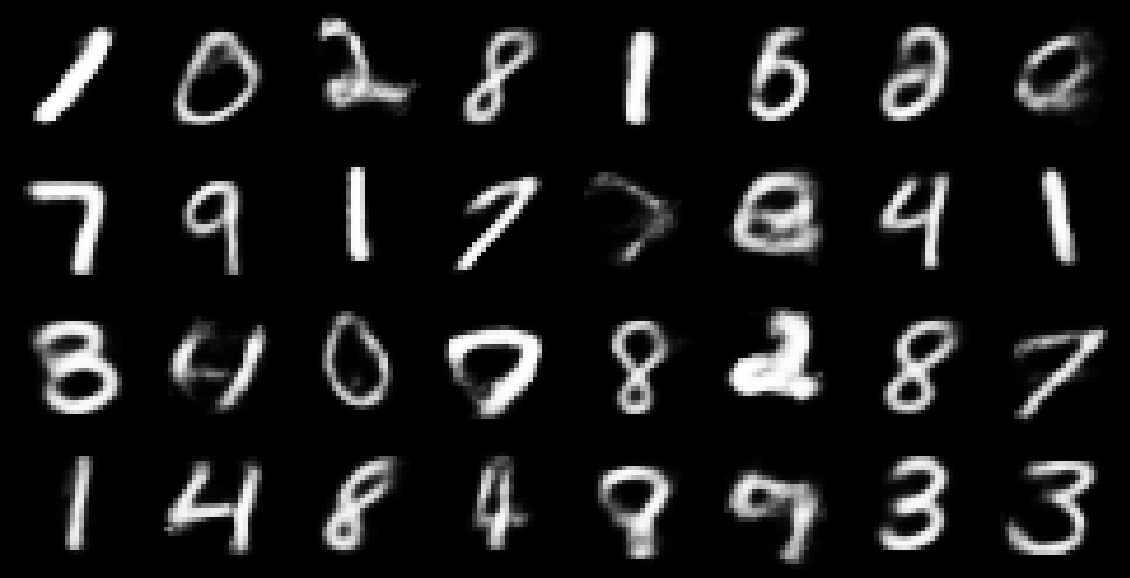}~
    \includegraphics[width=.32\textwidth]{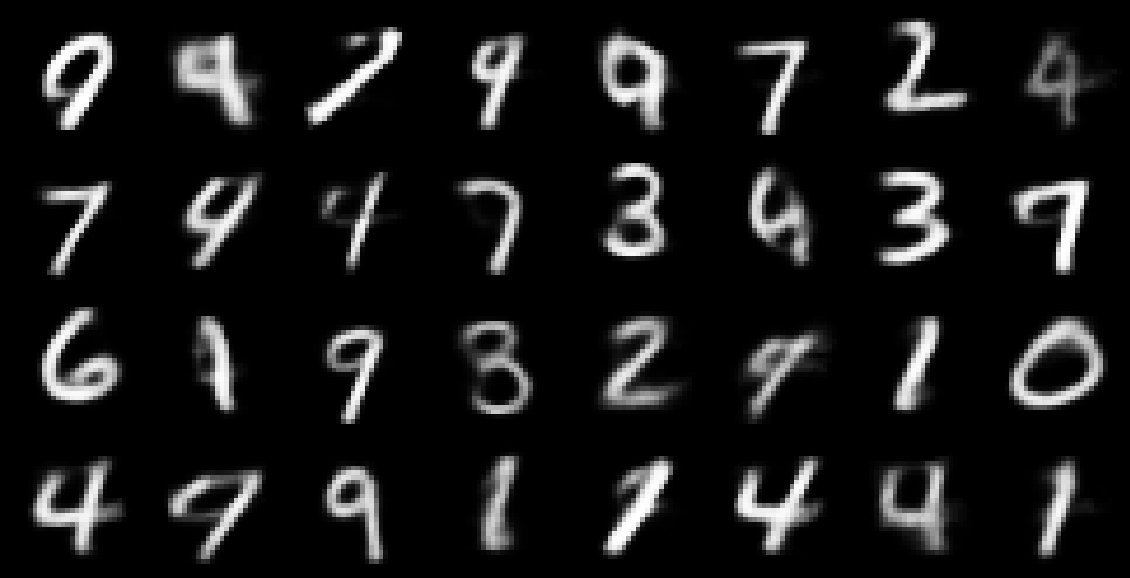}~
    \includegraphics[width=.32\textwidth]{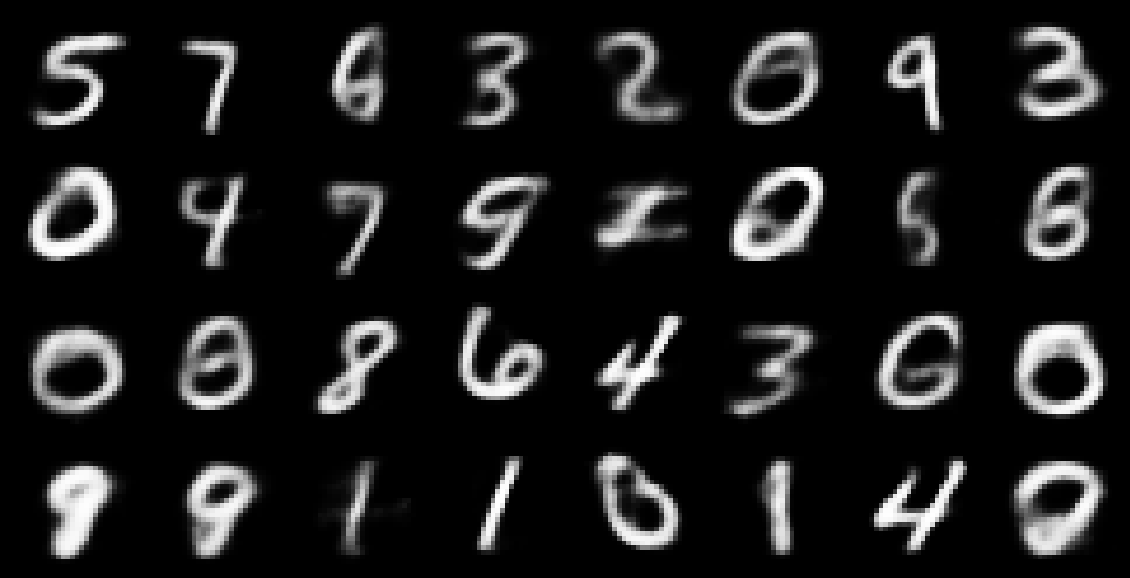} \\[3pt]
    \includegraphics[width=.32\textwidth]{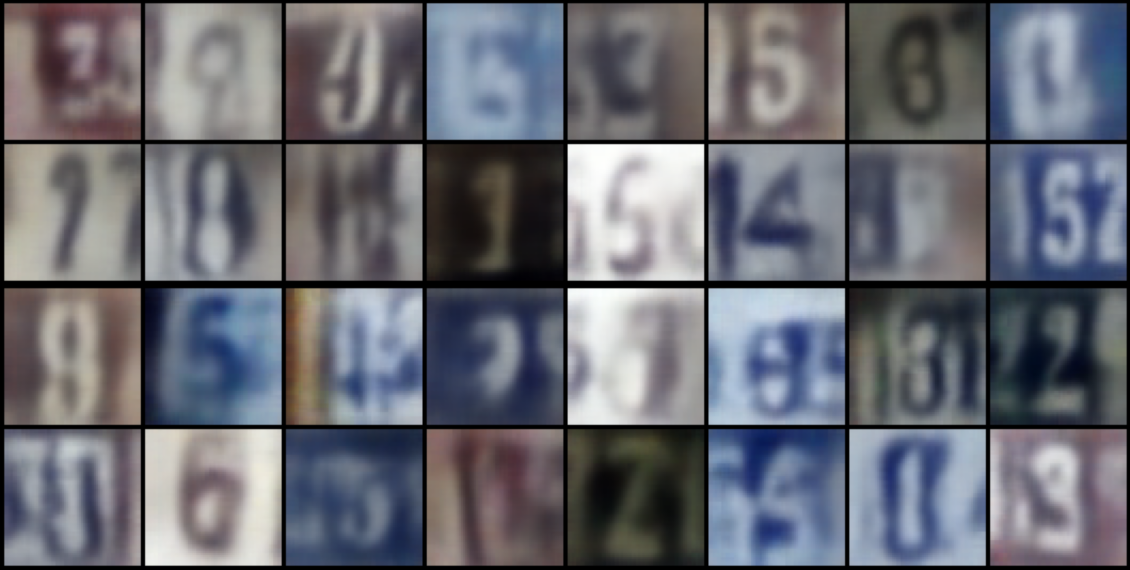}~
    \includegraphics[width=.32\textwidth]{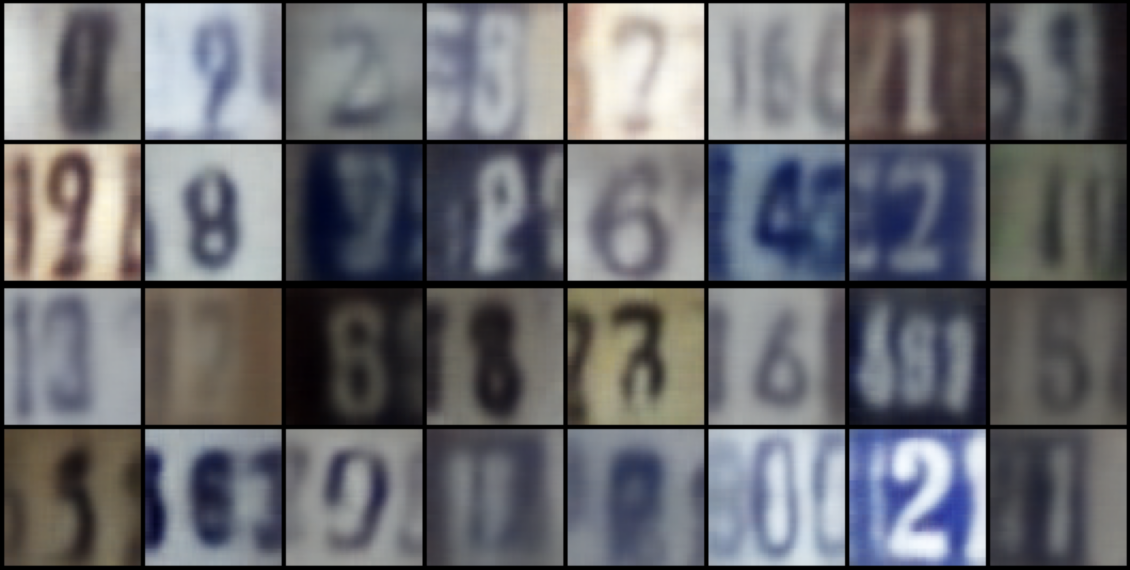}~
    \includegraphics[width=.32\textwidth]{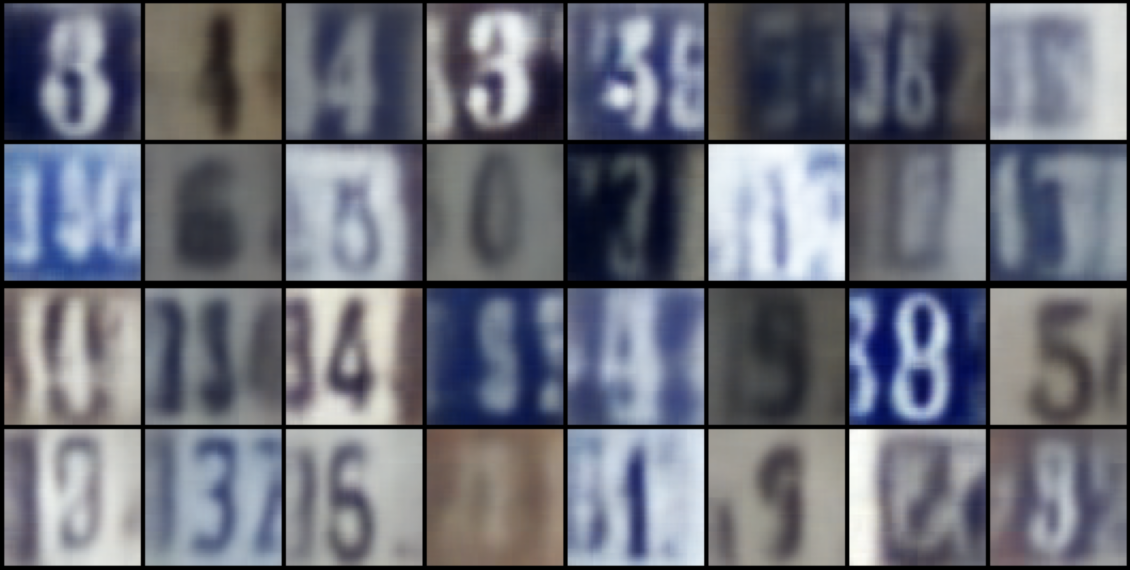} \\[3pt]
    \includegraphics[width=.32\textwidth]{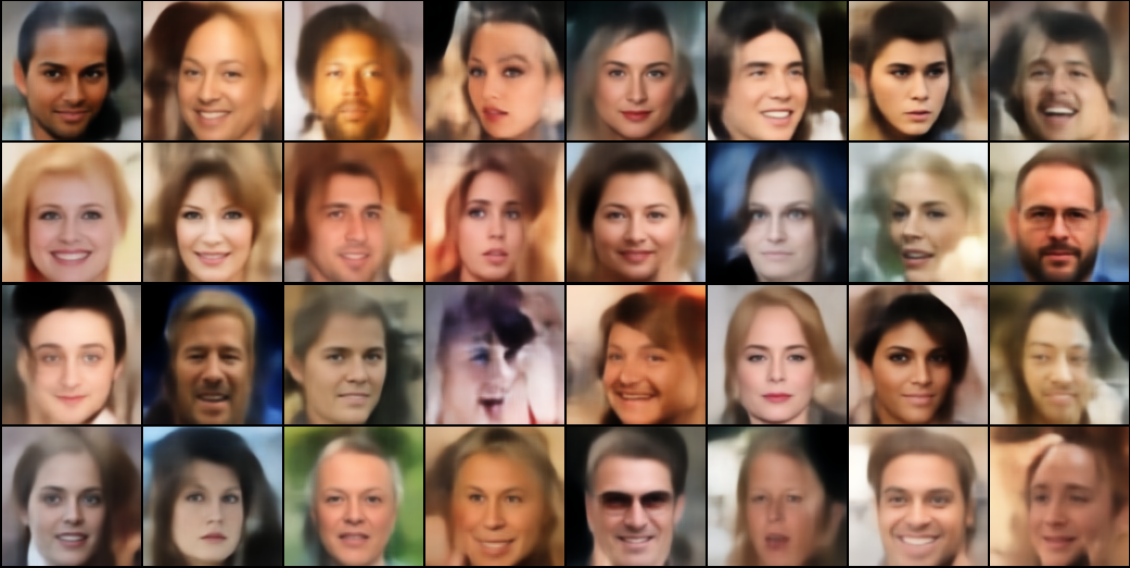}~
    \includegraphics[width=.32\textwidth]{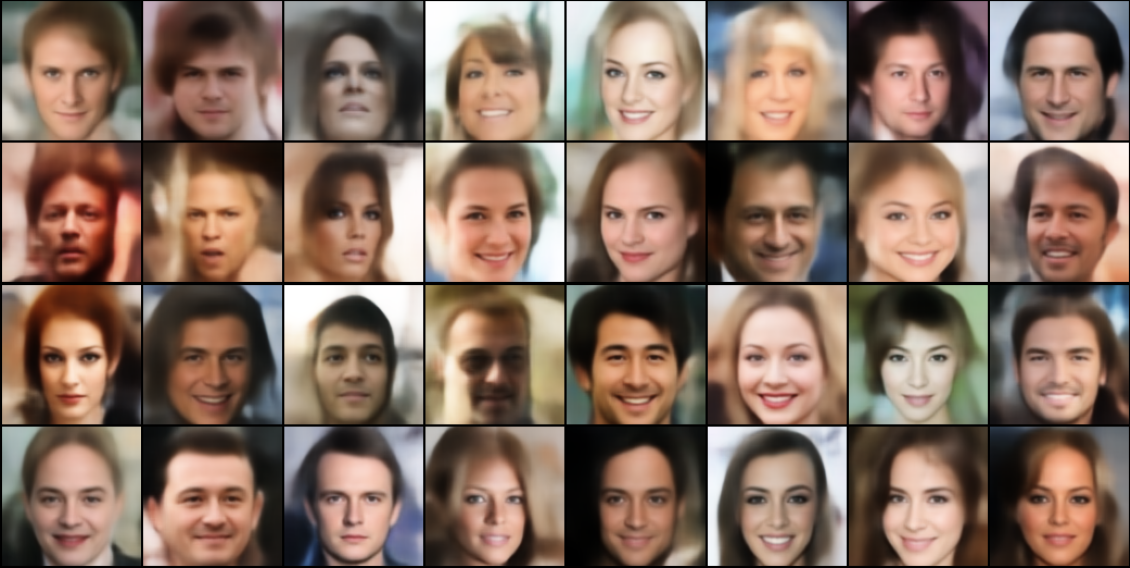}~
    \includegraphics[width=.32\textwidth]{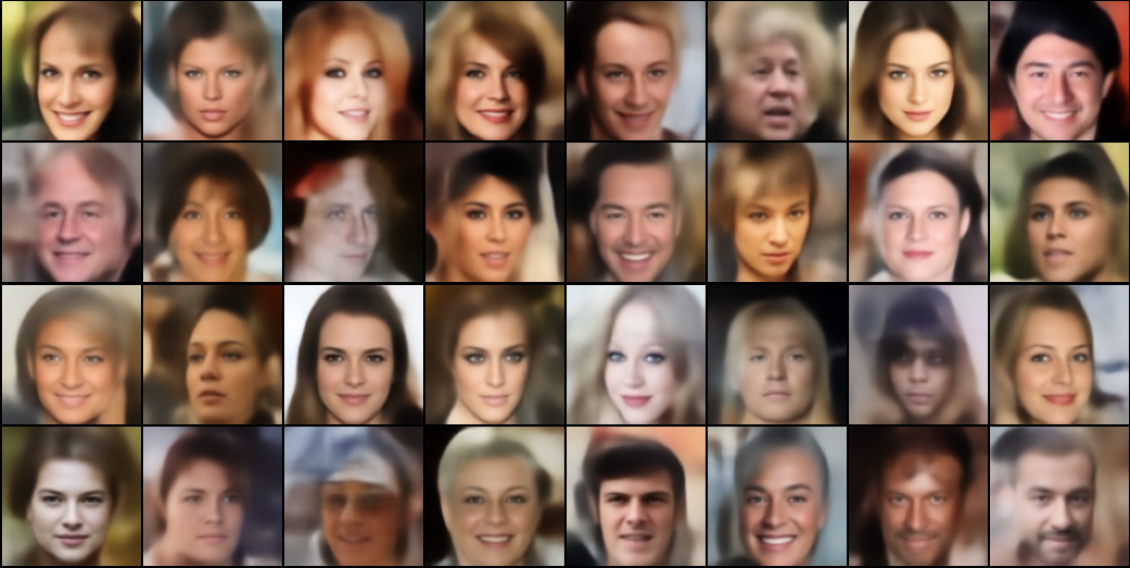}
    
    \caption{Sampled images for the VAE, \mm{IWAE-5, and} IWAE-10 models (from left to right) trained on MNIST, \mm{SVHN, and} CelebA datasets (from top to bottom).}
    \label{fig.samples_images}
\end{figure*}

In the case of the SVHN dataset, we used deeper architectures for \mm{the encoder} and decoder. Both networks consisted mainly of 4 layers. The encoder had only 2D convolutions, between which we used leaky ReLU with leakiness \mm{of 0.2.} In the decoder, we applied \mm{2D} transposed convolutions and ReLu as activation functions. For the last layer, we used the sigmoid activation function. 

For the CelebA dataset, we applied network architectures that consisted mainly of replicated 5-layer blocks. In the encoder network, each block was built with a 2D convolution layer, batch normalization~\citep{ioffe2015batch}, and leaky ReLU with leakiness \mm{of} 0.2.  A single block in the decoder network contained \mm{an operation} that applied a 2D \mm{nearest-neighbor} upsampling to an input signal composed of several input channels. Then, similar to the block of the encoder network, there was a 2D convolution layer, batch normalization, and leaky ReLU with \mm{a leakiness of 0.2.}

\paragraph{Details for Experiment with Laplace Distributed Data}
\mm{We generated $10^4$ observations from Laplace distribution $\textrm{Laplace}(0, 0.2)$. Then we use them to calculate the (average) log-likelihood, ELBO, IW-ELBO lower bound, and our proposed upper bound (taking $C = 0$) for the previously learned VAE with one-dimensional latent space. To estimate lower and upper bounds, we sampled from the latent different numbers of times (from 1 to 64), to examine how the number of draws will affect bound positions. In the training procedure of the underlying VAE model,} we selected batches of size 1000 and SGD optimizer with the learning rate $10^{-7}$. The encoder consisted of one hidden layer with 4 neurons and \mm{a ReLU} activation function in it, with linear activation in the output layer. We set an identical network architecture for the decoder.

\mm{\paragraph{Simple Qualitative Evaluation}
We tried to confirm the quantitative results presented in \Cref{fig.compare_models} in the main paper by comparing visually samples randomly generated by various models. We expected to obtain images of the best quality for VAE trained on MNIST and for IWAE models trained on SVHN, as well as images of comparable quality for all considered models trained on CelebA. The respective samples are presented in ~\Cref{fig.samples_images}. Although some slight visual effects might be visible after a closer look, the differences are not impressive. Hence, in this case, simple qualitative evaluation is a rather inadequate method. }

\vfill

\end{document}